\let\NAT@parse\undefined
\def\BibTeX{{\rm B\kern-.05em{\sc i\kern-.025em b}\kern-.08em
				T\kern-.1667em\lower.7ex\hbox{E}\kern-.125emX}}
\title{\LARGE \bf Efficient Large-Scale Multi-Drone Delivery Using Transit Networks}
\author{Shushman Choudhury, Kiril Solovey, Mykel J. Kochenderfer, and Marco Pavone%
\thanks{The authors are with Stanford University, CA, USA.}
}
\begin{document}
	
\maketitle
\thispagestyle{empty}
\pagestyle{empty}

\def\frechet{Fr\'echet\xspace}

\newcommand{\cupdot}{\mathbin{\mathaccent\cdot\cup}}

\newcommand{\mtm}{\emph{multi-to-multi}\xspace}
\newcommand{\mts}{\emph{multi-to-single}\xspace}
\newcommand{\sts}{\emph{multi-to-single-restricted}\xspace}
\newcommand{\dtd}{\emph{single-to-single}\xspace}

\newcommand{\cte}{\emph{full-to-edge}\xspace}
\newcommand{\ctc}{\emph{full-to-full}\xspace}
\newcommand{\ete}{\emph{edge-to-edge}\xspace}

\newcommand{\AND}{{\sc and}\xspace}
\newcommand{\OR}{{\sc or}\xspace}

\newcommand{\ignore}[1]{}

\def\vor{\text{Vor}}

\def\P{\mathcal{P}} \def\C{\mathcal{C}} \def\H{\mathcal{H}}
\def\F{\mathcal{F}} \def\U{\mathcal{U}} \def\L{\mathcal{L}}
\def\O{\mathcal{O}} \def\I{\mathcal{I}} \def\S{\mathcal{S}}
\def\G{\mathcal{G}} \def\Q{\mathcal{Q}} \def\I{\mathcal{I}}
\def\T{\mathcal{T}} \def\L{\mathcal{L}} \def\N{\mathcal{N}}
\def\V{\mathcal{V}} \def\B{\mathcal{B}} \def\D{\mathcal{D}}
\def\W{\mathcal{W}} \def\R{\mathcal{R}} \def\M{\mathcal{M}}
\def\X{\mathcal{X}} \def\A{\mathcal{A}} \def\Y{\mathcal{Y}}
\def\L{\mathcal{L}} \def\E{\mathcal{E}}

\def\dS{\mathbb{S}} \def\dT{\mathbb{T}} \def\dC{\mathbb{C}}
\def\dG{\mathbb{G}} \def\dD{\mathbb{D}} \def\dV{\mathbb{V}}
\def\dH{\mathbb{H}} \def\dN{\mathbb{N}} \def\dE{\mathbb{E}}
\def\dR{\mathbb{R}} \def\dM{\mathbb{M}} \def\dm{\mathbb{m}}
\def\dB{\mathbb{B}} \def\dI{\mathbb{I}} \def\dM{\mathbb{M}}
\def\dZ{\mathbb{Z}} \def\dX{\mathbb{X}}

\def\eps{\varepsilon}

\def\limn{\lim_{n\rightarrow \infty}}

\def\obs{\mathrm{obs}}
\newcommand{\defeq}{%
  \mathrel{\vbox{\offinterlineskip\ialign{%
    \hfil##\hfil\cr
    $\scriptscriptstyle\triangle$\cr
    $=$\cr
}}}}
\def\Int{\mathrm{Int}}

\def\Reals{\mathbb{R}}
\def\Naturals{\mathbb{N}}
\renewcommand{\leq}{\leqslant}
\renewcommand{\geq}{\geqslant}
\newcommand{\compl}{\mathrm{Compl}}

\newcommand{\sig}{\text{sig}}

\newcommand{\sbs}{sampling-based\xspace}
\newcommand{\mr}{multi-robot\xspace}
\newcommand{\mpl}{motion planning\xspace}
\newcommand{\mrmp}{multi-robot motion planning\xspace}
\newcommand{\sr}{single-robot\xspace}
\newcommand{\cs}{configuration space\xspace}
\newcommand{\conf}{configuration\xspace}
\newcommand{\confs}{configurations\xspace}

\newcommand{\stl}{\textsc{Stl}\xspace}
\newcommand{\boost}{\textsc{Boost}\xspace}
\newcommand{\core}{\textsc{Core}\xspace}
\newcommand{\leda}{\textsc{Leda}\xspace}
\newcommand{\cgal}{\textsc{Cgal}\xspace}
\newcommand{\qt}{\textsc{Qt}\xspace}
\newcommand{\gmp}{\textsc{Gmp}\xspace}

\newcommand{\Cpp}{C\raise.08ex\hbox{\tt ++}\xspace}

\def\concept#1{\textsf{\it #1}}
\def\ccode#1{{\texttt{#1}}}

\newcommand{\ch}{\mathrm{ch}}
\newcommand{\pspace}{{\sc pspace}\xspace}
\newcommand{\threesum}{{\sc 3Sum}\xspace}
\newcommand{\np}{{\sc np}\xspace}
\newcommand{\argmin}{\operatornamewithlimits{argmin}}
\newcommand{\argmax}{\operatornamewithlimits{argmax}}

\newcommand{\Gdisk}{\G^\textup{disk}}
\newcommand{\Gbt}{\G^\textup{BT}}
\newcommand{\Gsoft}{\G^\textup{soft}}
\newcommand{\Gnear}{\G^\textup{near}}
\newcommand{\Gembed}{\G^\textup{embed}}

\newcommand{\dist}{\textup{dist}}

\newcommand{\Cfree}{\C_{\textup{free}}}
\newcommand{\Cforb}{\C_{\textup{forb}}}

\newtheorem{lemma}{Lemma}
\newtheorem{theorem}{Theorem}
\newtheorem{corollary}{Corollary}
\newtheorem{claim}{Claim}
\newtheorem{proposition}{Proposition}

\theoremstyle{definition}
\newtheorem{definition}{Definition}
\newtheorem{remark}{Remark}
\theoremstyle{plain}
\newtheorem{observation}{Observation}

\def\len{c_\ell}
\def\bot{c_b}

\def\lenopt{\len^*}
\def\botopt{\bot^*}

\def\Im{\textup{Im}}

\def\rfunc{\left(\frac{\log n}{n}\right)^{1/d}}
\def\rfuncs{\left(\frac{\log n}{n}\right)^{1/d}}
\def\cfunc{\sqrt{\frac{\log n}{\log\log n}}}
\def\rtrs{\gamma\rfunc}
\def\ctrs{2\cfunc}
\def\aconn{\A_\textup{conn}}
\def\abd{\A_\textup{str}}
\def\aspan{\A_\textup{span}}
\def\aopt{\A_\textup{opt}}
\def\ao{\A_\textup{ao}}
\def\acfo{\A_\textup{acfo}}
\def\binomial{\textup{Binomial}}
\def\twin{\textup{twin}}

\def\aas{a.a.s.\xspace}
\def\0{\bm{0}}

\def\distU#1{\|#1\|_{\G_n}^U}
\def\distW#1{\|#1\|_{\G_n}^W}

\def\tooth{\scalerel*{\includegraphics{./../fig/tooth}}{b}}

\makeatletter
\def\thmhead@plain#1#2#3{%
  \thmname{#1}\thmnumber{\@ifnotempty{#1}{ }\@upn{#2}}%
  \thmnote{ {\the\thm@notefont#3}}}
\let\thmhead\thmhead@plain
\makeatother

\def\todo#1{\textcolor{blue}{\textbf{TODO:} #1}}
\def\new#1{\textcolor{magenta}{#1}}
\def\kiril#1{\textcolor{magenta}{(\textbf{Kiril:} #1)}}
\def\old#1{\textcolor{red}{#1}}
\def\shushman#1{\textcolor{red}{\textbf{Shushman:} #1}}
\def\zak#1{\textcolor{orange}{\textbf{Zak:} #1}}

\def\removed#1{\textcolor{green}{#1}}

\def\dx{\,\mathrm{d}x}
\def\dy{\,\mathrm{d}y}
\def\drho{\,\mathrm{d}\rho}

\newcommand{\prm}{{\tt PRM}\xspace}
\newcommand{\prmstar}{{\tt PRM}$^*$\xspace}
\newcommand{\rrt}{{\tt RRT}\xspace}
\newcommand{\est}{{\tt EST}\xspace}
\newcommand{\grrt}{{\tt GEOM-RRT}\xspace}
\newcommand{\rrtstar}{{\tt RRT}$^*$\xspace}
\newcommand{\rrg}{{\tt RRG}\xspace}
\newcommand{\btt}{{\tt BTT}\xspace}
\newcommand{\fmt}{{\tt FMT}$^*$\xspace}
\newcommand{\dfmt}{{\tt DFMT}$^*$\xspace}
\newcommand{\dprm}{{\tt DPRM}$^*$\xspace}
\newcommand{\mstar}{{\tt M}$^*$\xspace}
\newcommand{\drrtstar}{{\tt dRRT}$^*$\xspace}
\newcommand{\sst}{{\tt SST}\xspace}
\newcommand{\aorrt}{{\tt AO-RRT}\xspace}

\newcommand{\mct}{{MCT}\xspace}

\let\oldnl\nl%
\newcommand{\nonl}{\renewcommand{\nl}{\let\nl\oldnl}}%

\newcommand{\alg}{\textsc{MergeSplitTours}\xspace}
\newcommand{\opt}{\textup{\textsc{opt}}\xspace}

\newcommand{\timeCost}{T\xspace}
\newcommand{\energy}{N\xspace}
\newcommand{\capacity}{C\xspace}
\newcommand{\depot}{d\xspace}
\newcommand{\package}{p\xspace}
\newcommand{\depotSet}{D\xspace}
\newcommand{\packageSet}{P\xspace}
\newcommand{\depotVerts}{V_{\depotSet}\xspace}
\newcommand{\packageVerts}{V_{\packageSet}\xspace}
\newcommand{\transitVerts}{V_{TN}\xspace}
\newcommand{\transitRoute}{R\xspace}
\newcommand{\speed}{\sigma\xspace}

\def\x{\bm{x}}

\newif\ifarxiv
\arxivtrue
\ifarxiv
\newcommand{\supp}[1]{Appendix #1}
\else
\newcommand{\supp}[1]{the appendix}
\fi

\begin{abstract}
We consider the problem of controlling a large fleet of drones to deliver packages 
simultaneously across broad urban areas. To conserve energy, drones hop between public transit vehicles (e.g., buses and trams). 
We design a comprehensive algorithmic framework that strives to minimize the maximum
time to complete any delivery. We address the multifaceted complexity of the problem 
through a two-layer approach. First, the upper layer assigns drones to
package delivery sequences with a near-optimal polynomial-time task allocation
algorithm. Then, the lower layer executes the allocation by periodically routing the fleet over the transit network while employing efficient bounded-suboptimal multi-agent pathfinding techniques tailored to our setting.
Experiments demonstrate the efficiency of our approach on settings with up to $\bm{200}$ drones, $\bm{5000}$ packages,
and transit networks with up to $\bm{8000}$ stops in San Francisco and Washington DC.
Our results show that the framework computes solutions typically within a few seconds on commodity hardware, and that
drones travel up to $\bm{360 \%}$ of their flight range with public transit.

\end{abstract}

\section{Introduction}
Rapidly growing e-commerce demands have greatly strained dense urban communities
by increasing delivery truck traffic and slowing operations and impacting
travel times for public and private vehicles~\cite{Humes18,HolguinETAL18}.
Further congestion is being induced by newer services
relying on ride-sharing vehicles. There is a clear need to redesign the current method of package distribution
in cities~\cite{FakleETAL17}. 
The agility and aerial reach of drones, the flexibility and ease of
establishing drone networks, and recent advances in drone
capabilities make them highly promising for logistics
networks~\cite{JoerssETAL16}. However, drones have limited travel range and carrying capacity~\cite{SudburyETAL16,Dji}.
On the other hand, ground-based transit networks have less flexibility but greater coverage and throughput.
By combining the strengths of both, we can achieve significant commercial benefits
and social impact (e.g., reducing ground congestion and delivering essentials).

\begin{figure}[t]
  \centering
  \fbox{\includegraphics[width=0.95\columnwidth]{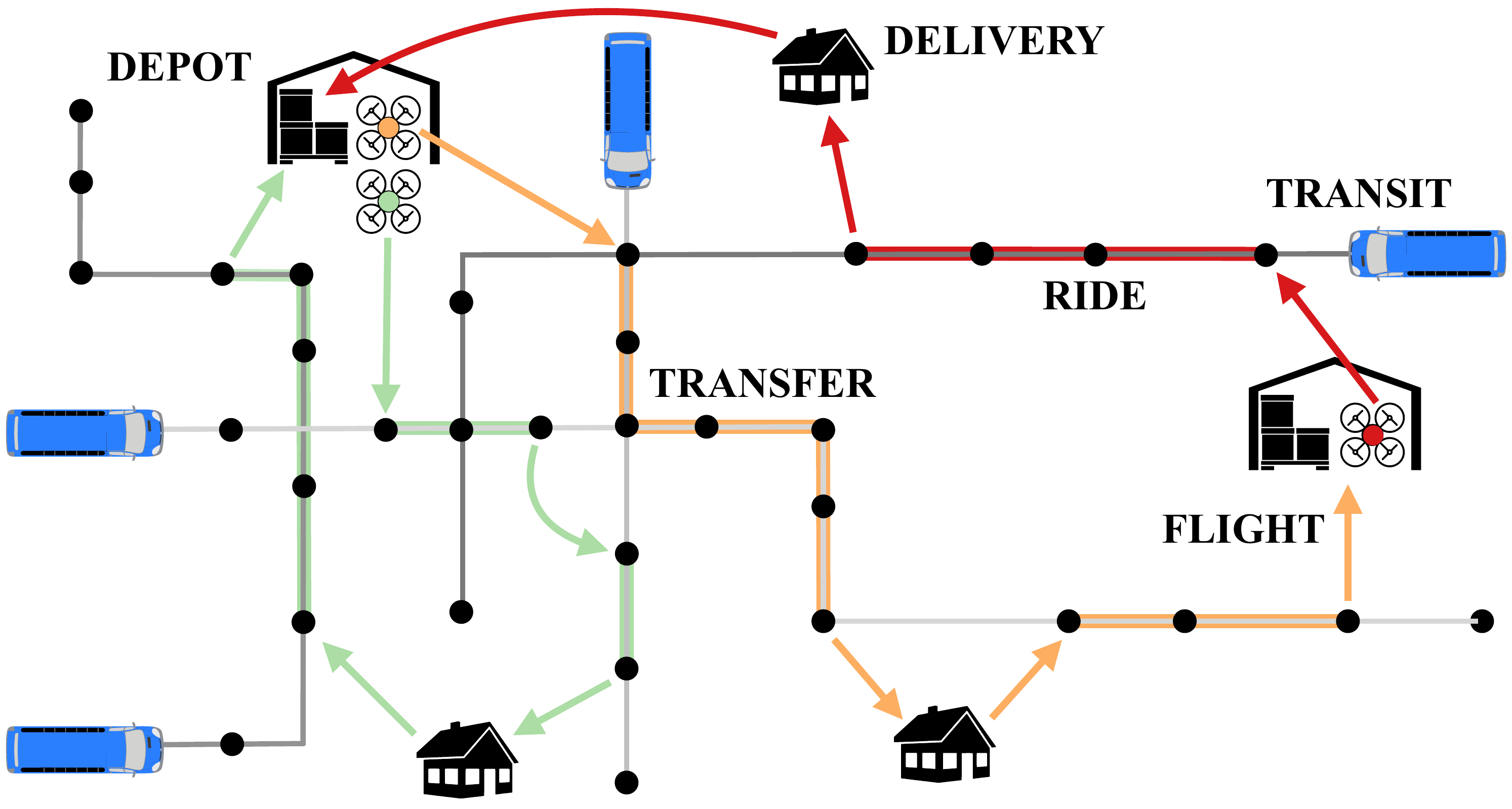}}
  \caption{
  Our multi-drone delivery framework plans for drones to piggyback on public transit vehicles
  while delivering packages from depots to the requested locations.
  Our framework is scalable and efficient, and minimizes the amount of time for any individual delivery.}
  \label{fig:pg1-fig}
  \vspace{-15pt}
\end{figure}

We address the problem of operating a large number of drones to deliver multiple packages 
simultaneously in an area. The drones can use one or more 
vehicles in a public-transit network
as modes of transportation, thereby saving  their 
limited battery energy stored onboard and increasing their effective travel range. We are required to 
decide which deliveries each drone should make and in what 
order, which modes of transit to use, and for what duration (\Cref{fig:pg1-fig}). 

Our approach must contend with the multiple significant challenges of our problem.
It must plan over large time-dependent transit networks, while
accounting for energy constraints that limit the drones' flight ranges. It must avoid inter-drone
conflicts, such as where more than one drone
attempts to board the same vehicle at the same time, or when the maximum carrying capacity of a vehicle
is exceeded. We seek not just feasible multi-agent plans
but high-quality solutions in terms of a cumulative objective over all drones, the makespan, i.e.,
the maximum individual delivery time for any drone. Additionally, our approach must also solve the task allocation problem of determining
which drones deliver which packages, and from which distribution centers.

\subsection{Related work}
Some individual aspects of our problem have already been studied.~\citet{choudhury2019dynamic} investigated the single-agent setting of controlling a drone to use 
multiple modes of transit en route to its destination. 
Recent work has considered pairing a drone with a delivery truck, which does not exploit public transit~~\cite{MurrayChu15,AgatzETAL18,FerrandezETAL16}. 
The multi-agent issues of task allocation and inter-agent conflicts were not
addressed either.
Our problem is closely related to routing a fleet of autonomous vehicles providing mobility-on-demand services~\cite{SoloveyETAL19,IglesiasETAL19,WallerETAL18}. Specifically, the task is to compute routes for the vehicles (both customer-carrying and empty) so that travel demand is fulfilled and operational cost is minimized. In particular, recent works study the combination of such service with public transit, where passengers can use several modes of transportation in the same trip~\cite{SalazarETAL18, ZgraggenETAL19}.
However, such works abstract away inter-agent constraints or dynamics and are not suited for autonomous pathfinding. 
The task-allocation setting we consider in our problem can be viewed as an instance of the vehicle routing problem~\cite{CaceresETAL14,AlenaETAL18, TothVigo2014}, variants of which are typically solved by mixed integer linear programming (MILP) formulations that scale poorly, or by heuristics without optimality guarantees. 

We must contend with the challenges of planning for multiple agents.
Accordingly, the second layer of our approach is a multi-agent path finding (MAPF) problem~\cite{erdmann1987multiple, YuLaValle16}.
Since the drones are on the same team, we have a centralized or cooperative pathfinding setting~\cite{silver2005cooperative}.
The MAPF problem is NP-hard to solve optimally~\cite{yu2013structure}. A number of efficient solvers have been developed that
work well in practice~\cite{felner2017search}. The MAPF formulation and algorithms have been extended to several relevant scenarios such as lifelong pickup-and-delivery~\cite{ma2017lifelong} and joint task assignment and pathfinding~\cite{honig2018conflict,liu2019task}, though for different task
settings and constraints than ours. 
Also, a MAPF formulation was applied for UAV traffic management in cities~\cite{ho2019multi}.
However, none of the approaches considered pathfinding over large time-dependent transit
networks. We use models, algorithms and techniques from transportation planning~\cite{pyrga2008efficient,delling2009engineering,bast2016route}.

\subsection{Statement of contributions}
We present a comprehensive algorithmic framework for large-scale multi-drone delivery in synergy with a ground transit network. 
Our approach strives to minimize the maximum time to complete any delivery.
We decompose the highly challenging problem and solve it stage-wise
with a two-layer approach. First, the upper layer assigns drones to
package-delivery sequences with a %
task allocation algorithm. Then, the lower layer executes the allocation by periodically routing the fleet over the transit network. %

Algorithmically, we develop a new delivery sequence allocation method for the upper layer that obtains a near-optimal solution in polynomial runtime. 
For the lower layer, we extend techniques for multi-agent path finding that account for
time-dependent transit networks and agent energy constraints to perform multi-drone routing. 
Experimentally, we present results supporting the efficiency of our approach on settings with up to ${200}$ drones, ${5000}$ packages, and transit networks of up to ${8000}$ stops in San Francisco and the Washington DC area.
Our framework can compute solutions within a few seconds (up to ${2}$ minutes for the largest settings) on commodity hardware, and in our problem scenarios,
drones can travel up to ${450 \%}$ of their flight range using transit.

The following is the paper structure. We present an overall description of the two-layer approach in \Cref{sec:methodology}, and then elaborate on each layer in Sections~\ref{sec:allocation} and~\ref{sec:mapf-tn}. We present experimental results on simulations in \Cref{sec:results}, and conclude the paper with Section~\ref{sec:conclusion}.
\section{Methodology}
\label{sec:methodology}
We provide a high-level description of our formulation and approach to illustrate the various interacting components.

\subsection{Problem Formulation}
\label{sec:methodology-problem}

We are operating a centralized homogeneous fleet of $m$ drones within a city-scale domain.
There are $\ell$ product \emph{depots} with known geographic locations, denoted by
$V_D:=\{d_1,\ldots,d_\ell\} \subset \dR^2$. The depots are both product dispatch centers
and drone-charging stations.
At the start of a large time interval (e.g., a day), a batch of delivery request locations for $k$ different \emph{packages},
denoted $V_P:=\{p_1,\ldots,p_k\}\subset \dR^2$,
is received (we assume that $k \gg m$).
We assume that any package can be dispatched from any depot; our framework exploits this property to optimize
the solution quality in terms of \emph{makespan}, i.e., the maximum execution time for any delivery. In~\Cref{sec:allocation}, we mention
how our approach can accommodate dispatch constraints.

The drones carry packages from depots to delivery locations.
They can extend their effective travel range by using public transit vehicles in the area, which remain unaffected by the drones' actions.
Our problem is to route drones to deliver all packages while minimizing makespan.
A drone route consists of its current location and the sequence of depot and package locations to visit
with a combination of flying and riding on transit.
We characterize the drones' limited energy as a maximum flight distance constraint.
A feasible solution must satisfy inter-drone constraints such as collision avoidance and
transit vehicle capacity limits.

Finally, we make some assumptions for our setting: a drone carries one package at a time, which is reasonable given state-of-the-art drone payloads~\cite{Dji}; drones are recharged upon visiting a depot in negligible time (e.g., a battery replacement); depots have unlimited drone capacity; the transit network is deterministic with respect to locations and vehicle travel times 
(we mention uncertainty in~\Cref{sec:conclusion}). We do account for the time-varying nature of the transit.

\subsection{Approach overview}
\label{sec:methodology-approach}

In principle, we could frame the entire problem as a mixed integer linear program (MILP). 
However, for real-world problems (hundreds of drones; thousands of packages; large transit networks),
even state-of-the-art MILP approaches are unlikely to scale.
Moreover, even a simpler problem that ignores the interaction constraints
is an instance of the notoriously challenging multi-depot vehicle routing problem~\cite{AlenaETAL18}.
Thus, we decouple the problem into two distinct subproblems that we solve stage-wise in layers.

The upper layer performs \emph{task allocation} to determine which packages are delivered by which drone
and in what order. It takes as input the known depot and package locations, and an estimate
of the drone travel time between every pair of locations.
It then solves a threefold allocation to minimize delivery makespan and assigns to each package 
(i) the dispatch depot and (ii) the delivery drone, and to each drone (iii) the order of package deliveries.
To this end, we develop an efficient polynomial-time task-allocation algorithm that achieves a near-optimal makespan.

The lower layer performs \emph{route planning} for the drone fleet to execute the allocated delivery tasks.
It generates detailed routes of drone locations in time and space and the transit vehicles used, while 
accounting for the time-varying transit network. It also ensures that
(i) simultaneous transit boarding by multiple drones is avoided, (ii) no transit vehicle exceeds
its drone-carrying capacity, and (iii) drone (battery) energy constraints are respected.
We efficiently handle individual and inter-drone constraints 
by framing the routing problem as an extension of multi-agent path finding (MAPF) 
to transit networks. We adapt a scalable, bounded sub-optimal variant of a highly effective MAPF
solver called Conflict-Based Search (CBS)~\cite{sharon2012conflict} to solve the one-delivery-per-drone problem.
Finally, we obtain routes for the sequence of deliveries in a receding-horizon fashion by replanning for the next task
once a drone completes its current one.

Decomposition-based stage-wise optimization approaches typically have an approximation gap compared to the optimal solution of the full problem.
For us, this gap manifests in the surrogate cost estimate we use
for the drone's travel time in the task-allocation layer
(instead of jointly solving for allocation and multi-agent routing over transit networks, which 
is not feasible at scale). The better the surrogate, the more \emph{coupled} the layers are, i.e.,
the better is the solution of the first stage for the second one. Such surrogates have a tradeoff between efficiency and
approximation quality.
An easy-to-compute travel time surrogate, for instance, is the drone's direct flight
time between two locations (ignoring transit). However, that can be poor-quality
when the drone requires transit for an out-of-range target.
We use a surrogate that actually accounts for the transit network,
at the expense of some modest preprocessing. 
We defer details to \supp{\ref{sec:appendix-surrogate}}, but the idea is to precompute
the pairwise shortest travel times between locations spread around 
the city, over a representative snapshot of the transit network.

\section{Task Assignment and Package Allocation}
\label{sec:allocation}

We leverage our problem's structure to design a new algorithm called \alg for the
task-allocation layer, which guarantees a near-optimal solution in polynomial time.
The goal of this layer is to (i) distribute the set of packages $V_P$ among $m$ agents, 
(ii) assign each package destination $p\in V_P$ to a depot $d\in V_D$, and
(iii) assign drones to a sequence of depot pickups and package deliveries.
The objective is to \textbf{minimize the maximum travel time among all agents} over all three of the above components.

Our problem can be cast as a special version of the $m$ traveling salesman problem~\cite{Bektas06}, which we call the $m$ \emph{minimal
visiting paths problem} ($m$-MVP). We seek a set of $m$ paths such that the makespan, i.e., the maximum travel time for any path, is minimized. 
We only need \emph{paths} that start and end at (the same or different) depots, not tours. Our formulation is a special case of the the \emph{asymmetric} variant, for a \emph{directed} underlying graph, which is NP-hard even for $m = 1$ on general graphs~\cite{AsadpourETAL17} (although it is not known whether the specific instance of our problem is NP-hard as well). Moreover, the current best polynomial-time approximation~\cite{AsadpourETAL17}
yields the fairly large approximation factor $O(\log n / \log \log n)$, for a graph with $n$ vertices.
An additional challenge is the inability to assume the triangle inequality on our objective of travel times.%

A key element of $m$-MVP is the \emph{allocation graph} $G_A=(V_A,E_A)$, with vertex set $V_A=V_D\cup V_P$.
Each directed edge $(u,v) \in E_A$ is weighted according to an estimated travel time $c_{uv}$ from the location of $u$
to that of $v$ in the city.  For every $d\in V_D, p\in V_P$ we exclude the edge $(d,p)$ from $E_A$ if it is impossible to reach $p$ from $d$ while using at most $1/2$ of the flight range allowed (similarly for $(p,d)$ edges).
As we flagged in~\Cref{sec:methodology-approach}, any dispatch constraints
are modeled by excluding edges from the corresponding depot.
We are now ready for the full definition of $m$-MVP:

\begin{algorithm}[t]
  \SetNlSkip{0em}
  Solve $\textsc{mct}(G_A)$ to get $t$ tours $\dT:=\{T_1,\ldots,T_t\}$\;
  \While{$|\dT|>1$}{
    Pick distinct tours $T,T'\in \dT$ and depots $d\in T,d'\in T'$ that minimize 
    $c_{dd'}+c_{d'd}$\;
    Merge $T,T'$ by adding $(d,d'),(d',d)$ edges \;
  }
  Split final tour $T$ into $m$ paths $P_1,\ldots, P_m $, where $\textsc{length}(P_i)$ is proportional to $\textsc{length}(T)/m$ for each $i$ (similar to~\cite{FredericksonETAL76})\;
  Extend each $P_i$ to ensure it begins and ends at a depot\;
  \Return $P_1,\ldots, P_m$\;
  \caption{\textsc{MergeSplitTours}$(G_A)$}
  \label{alg:mst_short}
  \vspace{-2pt}
\end{algorithm} 

\begin{table}
\caption{An integer programming formulation of the MCT problem.}
\noindent\fbox{\begin{minipage}[t]{0.95\columnwidth}%
Given allocation graph $G_A=(V_A,E_A)$, with $V_A=V_D\cup V_P$, 
\begin{align}
{\textrm{minimize}}   \sum_{(u,v)\in E_A}x_{uv}\cdot c_{uv}  \label{eq:mct:obj}
\end{align}\vspace{-5pt}
subject to
\begin{align}
   x_{uv}\in \{0,1\},  \quad  \forall (u,v)\in E_A, u\in V_P\vee v\in V_P, \\
  x_{uv}\in \dN_{\geq 0},  \quad \forall (d,d')\in E_A, d,d'\in V_D, \\
  \sum_{d\in N_+(p)}x_{dp}= \sum_{d\in N_-(p)}x_{pd}=1, \quad \forall p\in V_P, \label{eq:mct:out} \\
  \sum_{v\in N_+(d)}x_{vd}-\sum_{v\in N_-(d)}x_{dv}=0,  \quad \forall d\in V_D. \label{eq:mct:equal}
\end{align}
where $N_+(v), N_-(v)$ denote the in and out going neighbors of $v\in V_A$.
\end{minipage}}\vspace{5pt}
\label{tbl:mct}
\vspace{-15pt}
\end{table}

\begin{definition}\label{def:mvp}
Given allocation graph $G_A$, the $m$ minimal visiting paths problem ($m$-MVP) consists of finding $m$ paths $P^{*}_{1:m}$ on $G_A$, such that (1) each path $P^*_i$ starts at some depot $d\in V_D$ and terminates at the same or different $d'\in V_D$, (2) exactly one path visits each package $p\in V_P$, and (3) the maximum travel time of any of the paths is minimized. 
\end{definition}
Let $\textsc{opt}$ be the optimal makespan, i.e., $\textsc{opt}:=\max_{i\in [m]}\textsc{length}(P^*_i)$, where $\textsc{length}(\cdot)$ denotes the total travel time along a given path or tour.
We make three observations. First, if a path contains the sub-path $(d,p),(p,d')$, for some $d,d'\in V_D,p\in V_P$, 
then $p$ should be dispatched from depot $d$ and the drone delivering $p$ will return to $d'$ after delivery.
Second, a package $p$ being found in $P^*_i$ indicates that drone $i\in [m]$ should deliver it. Third,~$P^*_i$ fully characterizes the order of packages delivered by drone~$i$.

\begin{figure*}[th]
  \centering
  \begin{subfigure}{0.245\textwidth}
    \centering
    \fbox{\includegraphics[width=0.97\textwidth]{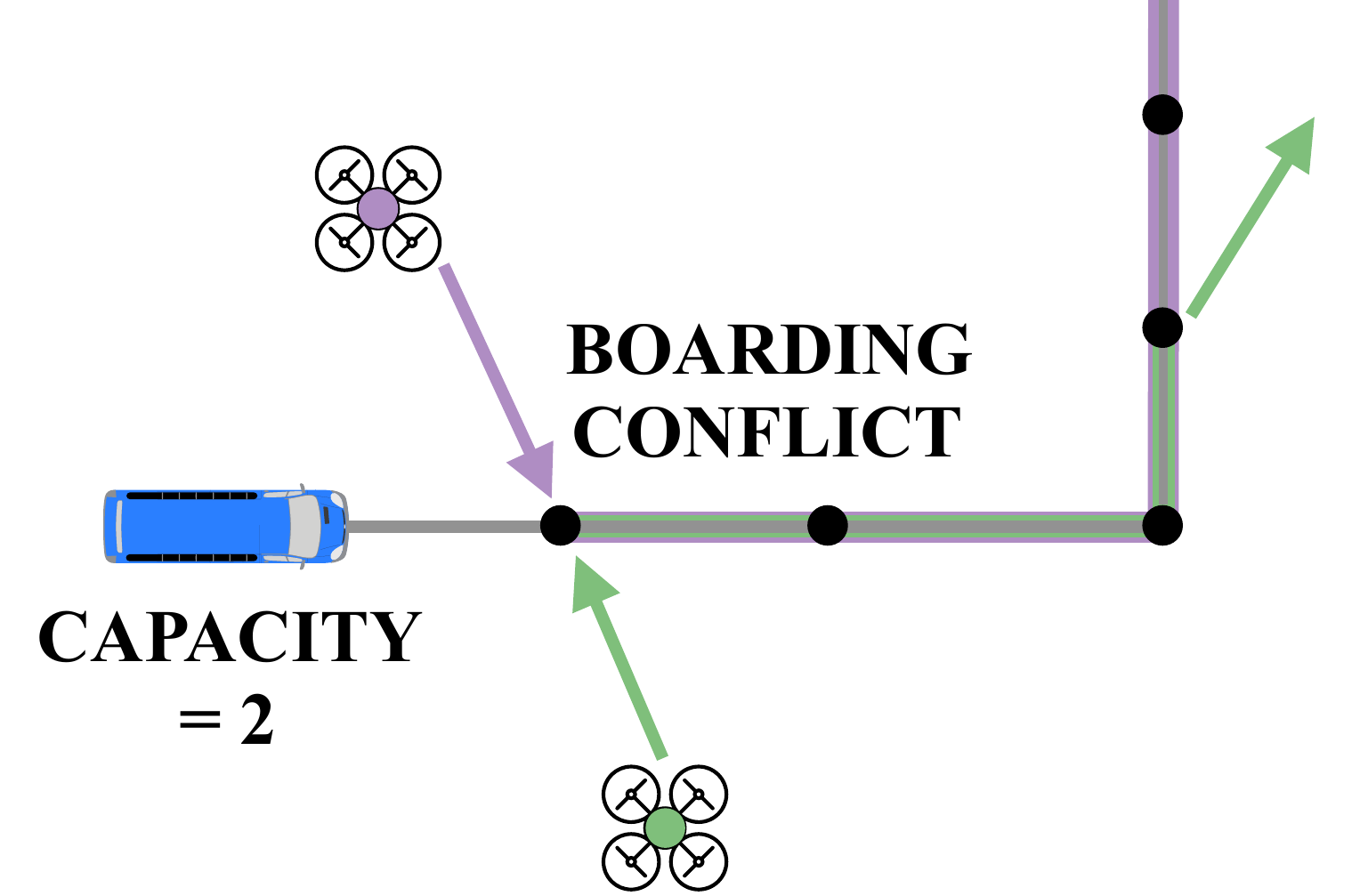}}
    \caption{}
    \label{fig:pre-conflict-1}
  \end{subfigure}
  \begin{subfigure}{0.245\textwidth}
    \centering
    \fbox{\includegraphics[width=0.97\textwidth]{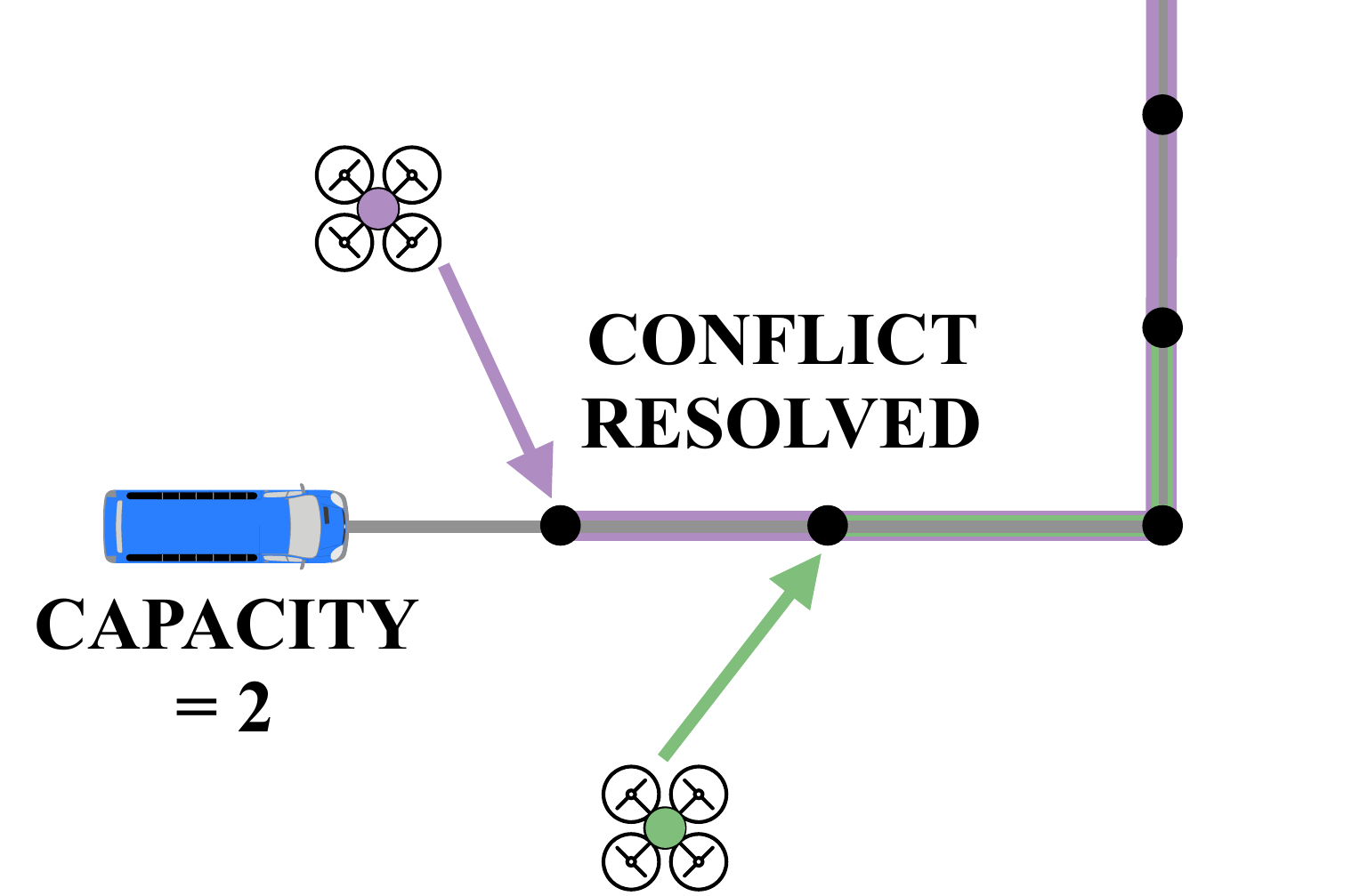}}
    \caption{}
    \label{fig:post-conflict-1}
  \end{subfigure}
  \begin{subfigure}{0.245\textwidth}
    \centering
    \fbox{\includegraphics[width=0.97\textwidth]{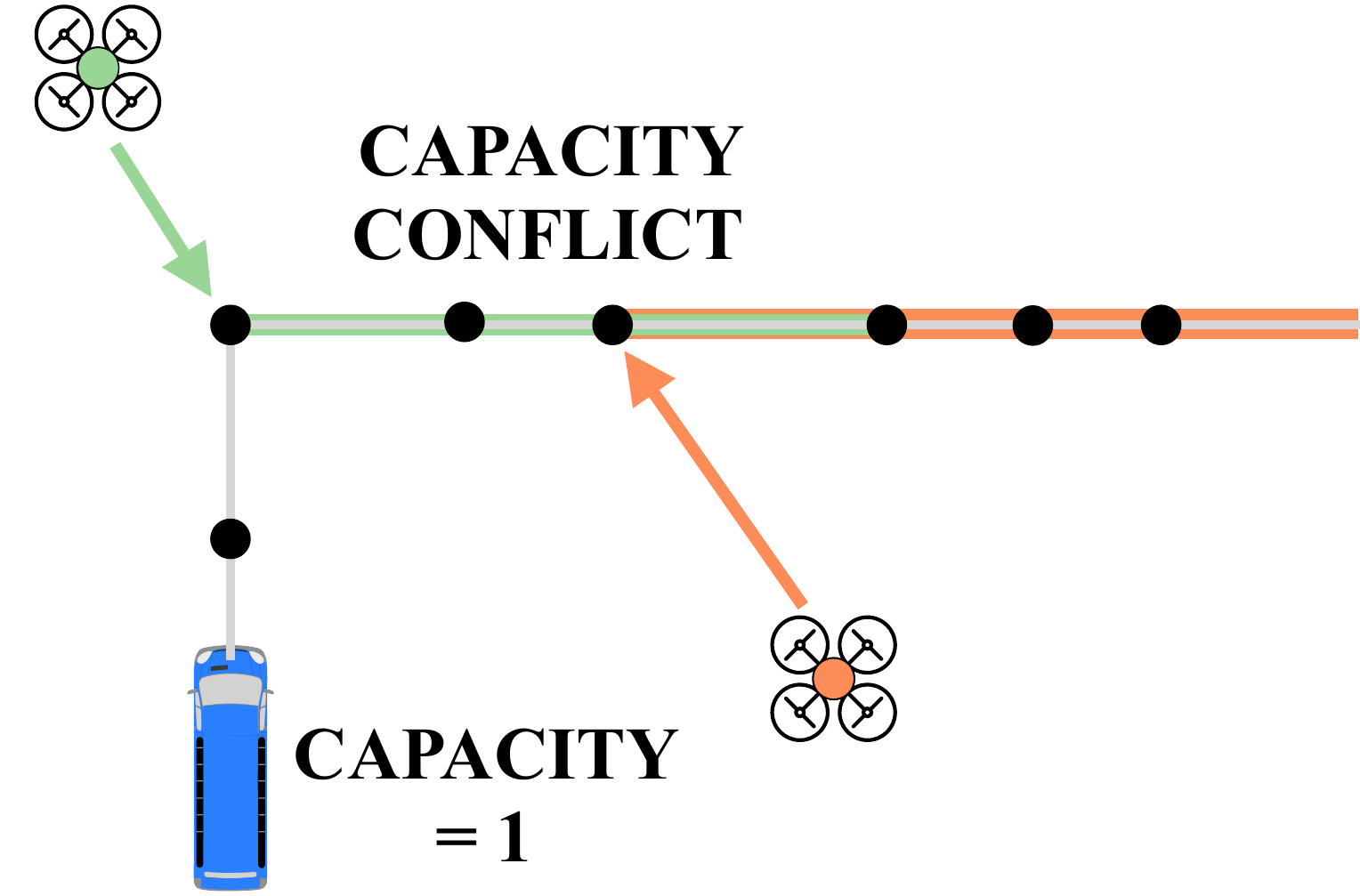}}
    \caption{}
    \label{fig:pre-conflict-2}
  \end{subfigure}
  \begin{subfigure}{0.245\textwidth}
    \centering
    \fbox{\includegraphics[width=0.97\textwidth]{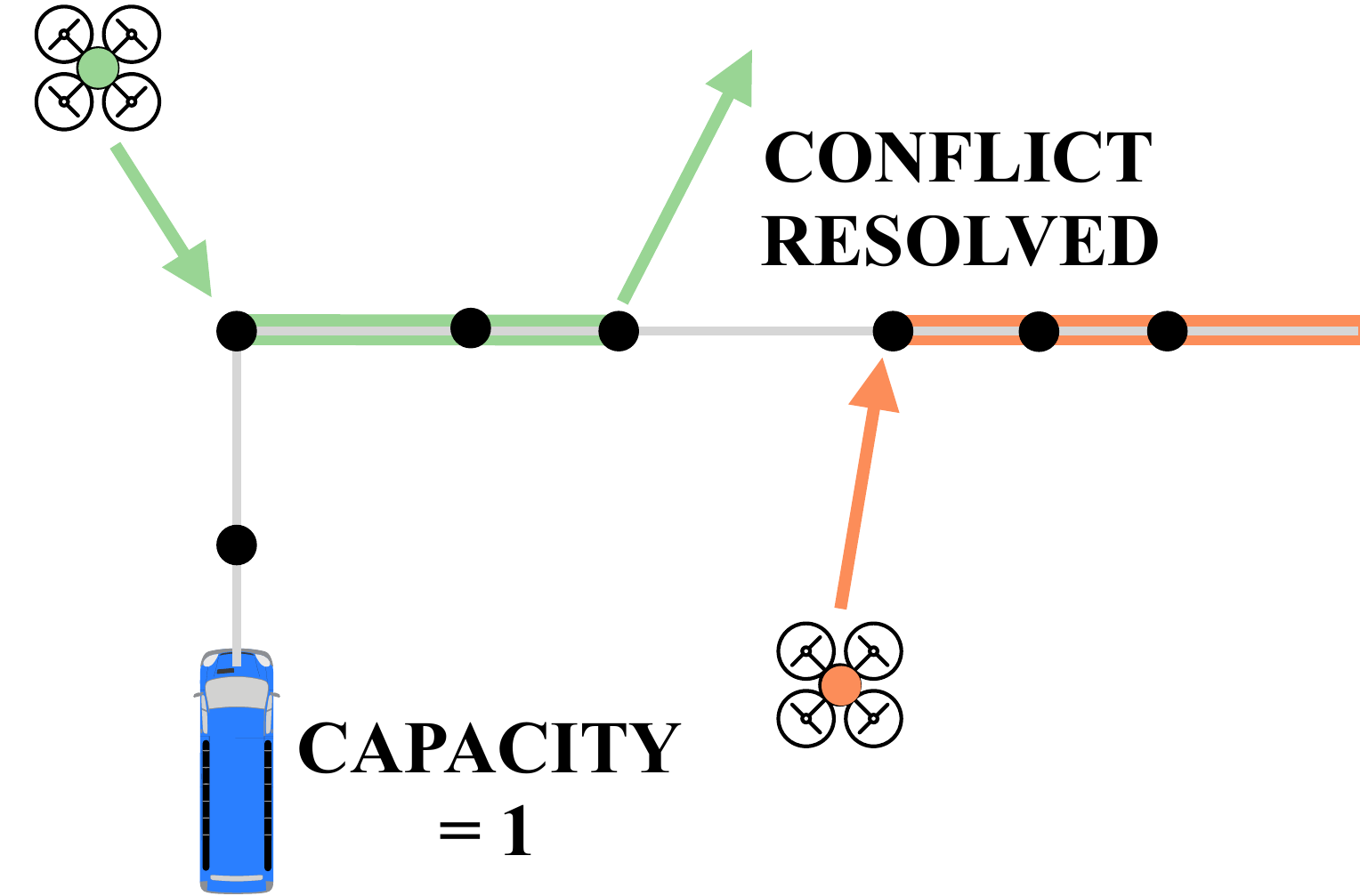}}
    \caption{}
    \label{fig:post-conflict-2}
  \end{subfigure}
  \caption{In our formulation of multi-agent path finding with transit networks, conflicts arise from the violation of
  shared inter-drone constraints: (\subref{fig:pre-conflict-1}) boarding conflicts between two
  or more drones and (\subref{fig:pre-conflict-2}) capacity conflicts between more drones than the
  transit vehicle can accommodate. The modified paths after resolving the corresponding conflicts are depicted in (\subref{fig:post-conflict-1})
  and (\subref{fig:post-conflict-2}), respectively.}
  \label{fig:conflicts}
  \vspace{-10pt}
\end{figure*}

\subsection{Algorithm Overview}
We present our \alg algorithm for solving $m$-MVP (\Cref{alg:mst_short}); see a detailed description  in \supp{\ref{sec:appendix-allocation}}. 
A key step is generating an initial set of tours $\dT$ by solving the minimal-connecting tours (\mct) problem (see~\Cref{tbl:mct}),
which attempts to connect packages to depots within tours to minimize the total edge weight in~\cref{eq:mct:obj}.
The constraint in~\cref{eq:mct:out} is that each package is connected to precisely one incoming and one outgoing edge from and to depots respectively.
The final constraint in~\cref{eq:mct:equal} enforces inflow and outflow equality for every depot.
Edges connecting packages can be used at most once,
whereas edges connecting depots can be used multiple times. The solution to \mct is the assignment $\{x_{uv}\}_{(u,v)\in E_A}$,
i.e., which edges of $G_A$ are used and how many times. This assignment implicitly represents the desired collection of the tours $T_1,\ldots, T_t$; see \supp{\ref{sec:appendix-allocation}}.

\subsection{Theoretical Guarantees}
All proofs from this secion are in \supp{\ref{sec:appendix-allocation}}.
The following theorem states that \alg is correct and that its makespan is close to optimal.

\begin{theorem}\label{thm:complete}
Suppose $G_A$ is strongly connected and the subgraph $G_A(V_D)$ induced by the vertices $V_D$ is a directed clique. Let $P_1,\ldots,P_m$ be the output of \alg. 
Then, every package $p\in V_P$ is contained in exactly one path $P_i$, and every $P_i$  starts and ends at a depot. Moreover, $\max_{i\in [m]}\textsc{length}(P_i) \leq \textsc{opt} + \alpha+\beta$ holds,
\[\textup{where } \alpha:=\max_{d,d'\in V_D} c_{dd'}+c_{d'd}\ , \ \beta:=\max_{d,d'\in V_D, p\in V_P}c_{dp}+c_{pd'}.\]
\end{theorem}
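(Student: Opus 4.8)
The plan is to analyze \alg stage by stage, tracking how the makespan accumulates through the three phases: (i) the initial \mct solve, (ii) the merge loop, and (iii) the split-and-extend step. First I would argue correctness. The flow-conservation constraints \cref{eq:mct:out} and \cref{eq:mct:equal} guarantee that the multiset of used edges decomposes into a collection of closed walks (tours) in which every package vertex has in-degree and out-degree exactly one, so each package lies on exactly one tour; the merge loop only glues tours together at depot vertices and hence preserves this invariant, producing a single tour $T$ through all packages. The split step cuts $T$ into $m$ contiguous pieces, and the final extension prepends/appends depot vertices to each piece, so every $P_i$ starts and ends at a depot and the packages are still partitioned among the $P_i$. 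This uses that $G_A(V_D)$ is a directed clique (so the extension edges and the merge edges always exist) and that $G_A$ is strongly connected (so \mct is feasible).

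Next, the makespan bound. The key lower bound is $\textsc{length}(\mct\text{-solution}) \le \textsc{opt} \cdot$ something — more precisely, I would show that an optimal $m$-MVP solution $P^*_{1:m}$, after closing each path into a tour by adding a single depot-to-depot return edge, yields a feasible \mct solution, so the optimal \mct objective is at most $\sum_i \textsc{length}(P^*_i) + (\text{depot return edges})$. Actually the cleaner route: \mct minimizes \emph{total} edge weight, and concatenating the optimal $m$ paths (closing each with one depot edge of weight $\le \alpha/2$ per path, or just bounding by $\alpha$) gives total length $\le \sum_{i} \textsc{length}(P_i^*) + m\alpha \le m\,\textsc{opt} + m\alpha$. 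Hence $\textsc{length}(T_{\textsc{mct}}) := \sum_j \textsc{length}(T_j) \le m(\textsc{opt}+\alpha)$, and since merging adds exactly $t-1$ pairs of depot edges each of total weight $\le \alpha$, the merged tour satisfies $\textsc{length}(T) \le m(\textsc{opt}+\alpha) + (t-1)\alpha$. One then needs $t \le m$ — because \mct produces at most $m$ tours; this should follow from the structure of an optimal/returned \mct solution (or can be enforced), and I would either cite the detailed appendix argument or observe that tours not containing a package can be discarded and each remaining tour ``costs'' at least as much as sits in some $P_i^*$. Granting $t\le m$, we get $\textsc{length}(T) \le m(\textsc{opt}+\alpha) + m\alpha$; here I'd be slightly more careful to get the stated constant — the intended bound is $\textsc{length}(T)\le m\cdot\textsc{opt}+m\alpha$ by a tighter accounting of the return edges (the merge edges replace the closing edges rather than adding to them).

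Then the Frederickson-style split: partitioning a tour of length $L$ into $m$ contiguous arcs gives each arc length at most $L/m + (\text{max single-edge weight})$. The max single-edge weight along $T$ is at most $\max(\alpha, \max_{d,p} c_{dp}, \max_{p,d}c_{pd}, \max_{p,p'}c_{pp'})$; using constraints from $G_A$ (the depot-to-package edges are only those reachable within half the flight range) I would bound the relevant quantity by $\beta = \max_{d,d',p} c_{dp}+c_{pd'}$, since any package-incident edge is dominated by such a sum. So each $P_i$ before extension has length $\le \textsc{length}(T)/m + \beta \le \textsc{opt} + \alpha + \beta$, and the final depot-extension adds at most... — this is the delicate point: naively extending adds another depot edge, which would inflate the bound. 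The resolution (and the main obstacle) is to show the split points can be \emph{chosen at depot vertices}, or that the arc already begins/ends near a depot so the extension edge's cost is already absorbed into the $\beta$ term; this is exactly where the ``$c_{dp}+c_{pd'}$'' form of $\beta$ is engineered, and where I expect the real work to lie. Modulo that bookkeeping, combining the three displayed inequalities gives $\max_i \textsc{length}(P_i) \le \textsc{opt} + \alpha + \beta$, as claimed. I would defer the careful split-point and extension argument — the genuinely fiddly part — to the detailed treatment in \supp{\ref{sec:appendix-allocation}}.
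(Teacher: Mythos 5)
Your overall strategy matches the paper's: bound the total \mct{} cost by $m\cdot\opt$ (via the observation that the optimal $m$-MVP paths induce a feasible \mct{} solution), charge the merge step at most $\alpha$ per merge for a total of $m\alpha$, and then split the single tour $T$ into $m$ contiguous pieces each of length at most $|T|/m+\beta$. The correctness half (flow conservation $\Rightarrow$ edge-disjoint tours covering each package once, preserved under merging and splitting) is also exactly the paper's argument.

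The one genuine gap you leave open --- ``the split points can be chosen at depot vertices, or the arc already begins/ends near a depot'' --- is real as stated but resolves immediately from a structural fact you did not use: constraint~\cref{eq:mct:out} forces every package vertex to have exactly one incoming edge \emph{from a depot} and one outgoing edge \emph{to a depot}, so the tour $T$ contains no package-to-package edges at all and strictly alternates $d_1 p_1 d_2 p_2 \cdots$. (Your worry about a $\max_{p,p'}c_{pp'}$ term is therefore moot; such edges never appear in $T$.) The split in \alg{} accordingly appends whole units $(d_j,p_j),(p_j,d_{j+1})$ to $P_i$ until its length exceeds $\textsc{length}(T)/m$, so every $P_i$ begins and ends at a depot \emph{by construction} --- no extension edge is ever added --- and the overshoot of the last unit is at most one unit's cost, which is precisely $\beta=\max_{d,d',p}c_{dp}+c_{pd'}$. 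This is why $\beta$ has that form. Your other flagged concern, $t\le m$, is also needed by the paper (which disposes of it by noting every \mct{} tour contains at least one depot and at least one package, so the number of tours, hence of merges, is controlled); you are right that this deserves more care than a one-line citation, but it does not change the approach. With these two points filled in, your accounting $\max_i\textsc{length}(P_i)\le |T|/m+\beta\le\opt+\alpha+\beta$ is exactly the paper's proof.
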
 

The key idea is that the total cost of the tours induced by the solution to \mct cannot exceed the total length of $\{P_1^*,\ldots,P_m^*\}$. The \mct solution is then adapted to $m$ paths with an additional overhead of $\alpha+\beta$ per path. When $m\ll|V_P|$ (typically the case), $\alpha \text{ and } \beta$ are small compared to $\opt$, making the bound tight.
For instance, in our randomly-generated scenarios in~\Cref{sec:results-alloc}, for $m = 5 \text{ and } k = 200$, the 
approximation ratio $\max_{i \in [m]} \textsc{length}(P_i) / \textsc{opt} = 1.09$, and for $m = 10, k = 500$,
the factor is $1.06$. %

The computational bottleneck of the algorithm is \mct,
while the other components can clearly 
be implemented polynomially in the input size. However, it suffices to solve a relaxed version of \mct to obtain the same integral solution. 

\begin{lemma}\label{lem:poly}
  The optimal solution to the fractional relaxation of \mct, in which $x_{uv}\in [0,1]$ for all $u\in V_P\vee v\in V_P$, and $x_{uv}\in \dR_+$ otherwise, yields the integer optimal solution. %
\end{lemma}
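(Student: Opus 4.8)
The plan is to show that the constraint matrix of the fractional relaxation of \mct is totally unimodular (TU), from which integrality of every vertex of the feasible polytope follows immediately (by the classical Hoffman--Kruskal theorem), and hence the LP optimum is attained at an integral point. To see why TU-ness is plausible here, observe that \mct is essentially a flow problem: the constraints~\cref{eq:mct:out} and~\cref{eq:mct:equal} are flow-conservation-type equalities, one per vertex, and each variable $x_{uv}$ appears with coefficient $+1$ in the constraint for its head $v$ (as an incoming edge) and $-1$ in the constraint for its tail $u$ (as an outgoing edge), or vice versa. The incidence matrix of a directed graph, where each column has exactly one $+1$ and one $-1$, is the textbook example of a TU matrix. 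So the first step is to rewrite all of \cref{eq:mct:out}--\cref{eq:mct:equal} in a uniform signed-incidence form and check that the resulting matrix is exactly (a submatrix of) the node-arc incidence matrix of $G_A$.

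The subtlety is that \cref{eq:mct:out} is stated as two separate scalar equations per package $p$ (one for the incoming depot edges summing to $1$, one for the outgoing depot edges summing to $1$), rather than a single conservation equation; and the package-incident variables $x_{dp},x_{pd'}$ are capped at $1$ by the box constraints $x_{uv}\in[0,1]$, while the depot-depot variables are only constrained to be nonnegative. So the second step is to argue that this refinement does not break TU-ness: splitting a single node-balance row into an ``in-degree $=1$'' row and an ``out-degree $=1$'' row, and adjoining the identity rows corresponding to the upper bounds $x_{uv}\le 1$, both preserve total unimodularity (appending rows of an identity matrix to a TU matrix yields a TU matrix, and the in/out split corresponds to working with the incidence matrix of an auxiliary bipartite-style graph in which each package is replaced by an in-copy and an out-copy joined appropriately). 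I would spell this out by exhibiting the explicit auxiliary graph: keep the depot vertices, replace each package $p$ by a directed edge $p_{\mathrm{in}}\to p_{\mathrm{out}}$ forced to carry one unit, route depot-to-package edges into $p_{\mathrm{in}}$ and package-to-depot edges out of $p_{\mathrm{out}}$; then \cref{eq:mct:out} becomes genuine conservation at $p_{\mathrm{in}}$ and $p_{\mathrm{out}}$, and the whole system is flow conservation on this graph with integral right-hand sides, whose constraint matrix is the (TU) incidence matrix.

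Once TU-ness is established, the conclusion is routine: the right-hand side vector ($1$'s for package constraints, $0$'s for depot-balance constraints) is integral, so by Hoffman--Kruskal every vertex of the polytope $\{x : \text{\cref{eq:mct:out},\cref{eq:mct:equal}},\ 0\le x_{uv}\le 1\ (u\in V_P\vee v\in V_P),\ x_{uv}\ge 0\ \text{otherwise}\}$ is integral. Since the relaxed \mct is a linear program over this polytope and the feasible region is nonempty (by strong connectivity of $G_A$, as used in \Cref{thm:complete}) and the objective is bounded below by $0$, an optimal vertex exists and it is integral; an integral feasible point satisfying \cref{eq:mct:out}--\cref{eq:mct:equal} is exactly a feasible solution to the original (integer) \mct, and it has the same objective value, so it is also optimal for the integer program. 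I expect the main obstacle to be purely expository rather than mathematical: carefully setting up the signed-incidence form and the package-splitting auxiliary graph so that the correspondence between the box constraints, the two halves of \cref{eq:mct:out}, and the incidence structure is manifestly correct — in particular making sure the depot-balance equation \cref{eq:mct:equal} (which allows multiplicities $>1$) is consistent with treating the depot-depot columns as ordinary uncapacitated arcs in the same incidence matrix.
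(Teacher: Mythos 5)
Your proposal is correct and follows essentially the same route as the paper: both reduce \mct to a minimum-cost circulation/flow problem by splitting each package vertex $p$ into an in-copy and an out-copy joined by an arc forced to carry one unit (the paper's auxiliary graph $G'$ with vertices $p,p'$ and $x_{pp'}=1$ is exactly your $p_{\mathrm{in}}\to p_{\mathrm{out}}$ construction), and then invoke total unimodularity of the resulting node-arc incidence matrix to conclude that the LP relaxation has an integral optimum. Your write-up is somewhat more explicit about the Hoffman--Kruskal step and about why the box constraints and the split of \cref{eq:mct:out} preserve total unimodularity, but the key idea is the same.
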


The lemma follows from casting \mct as the minimum-cost circulation problem, for which the constraint matrix is totally unimodular~\cite{AhujaETAL93}. Therefore, \alg can be implemented in polynomial time.

\section{Multi-Agent Path Finding}
\label{sec:mapf-tn}
For each drone $i \in [m]$, the allocation layer yields a sequence of delivery tasks
$\depot_{1}\package_{1}\ldots\package_{l}\depot_{l+1}$.
Each delivery sequence has one or more subsequences of
$\depot \package\depot^{\prime}$. 
The route-planning layer treats each $\depot \package\depot^{\prime}$ subsequence 
\emph{as an individual
drone task}, i.e., leaving with the package from depot $\depot$,
carrying it to package location $\package$ and returning to the
(same or different) depot $\depot^{\prime}$, without exceeding the energy
capacity. We seek an efficient and scalable method to obtain high-quality (with respect to travel time) feasible paths, while  using transit options to extend range,
for $m$ different drone $\depot\package\depot^{\prime}$ tasks simultaneously.
The full set of delivery sequences can be satisfied
by replanning when a drone finishes its current task and begins a new one;
we discuss and compare two replanning strategies in \supp{\ref{sec:appendix-results}}.
Thus, we formulate the problem of multi-drone routing to satisfy a set of delivery sequences as receding-horizon 
multi-agent path finding (MAPF) over transit networks. In this section, we describe the graph representation
of our problem and present an efficient bounded sub-optimal algorithm.

\subsection{MAPF with Transit Networks (MAPF-TN)}
\label{sec:mapf-tn-formulation}

The problem of 
Multi-Agent Path Finding with Transit Networks (MAPF-TN) is the extension of
standard MAPF to where agents can use one or more modes of transit
in addition to moving.
The incorporation of transit networks introduces additional challenges and underlying structure.
The \textbf{input} to MAPF-TN is the set of $m$ tasks $(d_i,p_i,d'_i)_{i=1:m}$ and the
directed operation graph $G_O = (V_O,E_O)$. 
In~\Cref{sec:allocation}, the allocation graph $G_A$ only considered depots and packages, and edges between them. 
Here, $G_O$ also includes transit vertices, $\transitVerts = \bigcup_{\tau \in \T} \transitRoute_{\tau}$, where $\T$ is the set
of trips, and each trip $\transitRoute_{\tau} = \{(s_1, t_1) \ldots\}$ is a sequence of time-stamped stop locations 
(a given stop location may appear as several different nodes with distinct time-stamps). Similarly, we also use time-expanded
versions of $V_D$ and $V_P$~\cite{pyrga2008efficient}.

The edges are defined as follows: An edge $e = (u , v) \in E$ is a \emph{transit edge} if $u,v \in \transitVerts$ and are consecutive
stops on the same trip $\transitRoute_t$. Any other edge is a \emph{flight edge}.
An edge is
\emph{time-constrained} if $v \in V_{TN}$ and \emph{time-unconstrained} otherwise.
Every edge has three attributes:
traversal time $\timeCost$, energy expended $\energy$, and capacity $\capacity$. Since each vertex is associated
with a location, $\lVert v - u \rVert$ denotes the distance between them for a suitable metric.
MAPF typically abstracts away agent dynamics; we have a simple model where drones move at constant speed $\speed$, and 
distance flown represents energy expended. 
Due to high graph density (drones can fly point-to-point
between many stops), we do not explicitly enumerate edges but generate them on-the-fly during search.

We now define the three attributes for $E_O$.
For time-constrained edges, $\timeCost(e) = v.t - u.t$ is the difference between corresponding time-stamps 
(if $u \in \depotVerts \cup \packageVerts$, $u.t$ is the chosen departure time), and for time-unconstrained edges, $\timeCost(e) = \lVert v - u \rVert  / \speed$
is the time of direct flight.
For flight edges, $\energy(e) = \lVert v - u \rVert$ (flight distance), and for transit edges,
$\energy(e) = 0$. For transit edges, $\capacity(e)$ is bounded by the capacity of the vehicle, 
while for flight edges, $\capacity(e) = \infty$. Here, we assume that time-unconstrained flight in open space can be accommodated (thorougly examined in~\cite{ho2019multi}).

We now describe the remaining relevant MAPF-TN details.
An individual path $\pi_i$ for drone $i$ from $\depot_i$ through $\package_i$
to $\depot^{\prime}_i$
is \textbf{feasible} if the energy constraint $\sum_{e \in \pi_i} \energy(e) \leq \bar{\energy}$ is satisfied,
where $\bar{\energy}$ is the drone's maximum flight distance.
In addition, 
the drone should be able to traverse the distance of a time-constrained flight edge
in time,
i.e., $\speed \times (v.t - u.t) > \lVert v - u \rVert$. 
For simplicity, we abstract away energy expenditure due to hovering in place
by flying the drone at reduced speed to reach the transit just in time.
Thus, the constraint $\bar{\energy{}}$ is only on the traversed distance.
The cost of an individual path is the total
traversal time, $\timeCost(\pi_i) = \sum_{e \in \pi_i} \timeCost(e)$. 
A \textbf{feasible solution} $\Pi = \bigcup_{i=1:m} \pi_i$ is a set of $m$ individually feasible paths that does not violate any of the following two \emph{shared constraints} (see \Cref{fig:conflicts}): (i) \emph{Boarding constraint}, i.e., no two drones may board the same vehicle
at the same stop; (ii) \emph{Capacity constraint}, i.e., a transit edge $e$ may not be used by more than $\capacity(e)$ drones.
As with the allocation layer, the \textbf{global objective} for MAPF-TN is to minimize
the solution makespan, $\argmin_{\Pi} \max_{\pi \in \Pi} \timeCost(\pi)$,
i.e., minimize the worst individual completion time.

\subsection{Conflict-Based Search for MAPF-TN}
\label{sec:mapf-tn-cbs}

To tackle MAPF-TN, we modify the Conflict-Based Search (CBS) algorithm~\cite{sharon2012conflict}.
The multi-agent level of CBS identifies shared constraints and imposes corresponding path constraints on the single-agent level.
The single-agent level computes optimal individual paths that respect all constraints.
If individual paths conflict (i.e., violate a shared constraint),
the multi-agent level adds further constraints to resolve the conflict,
and invokes the single-agent level again, for the conflicting agents. In MAPF-TN, conflicts arise from boarding and capacity constraints.
CBS obtains optimal multi-agent solutions without having to run 
(potentially significantly expensive) multi-agent searches.
However, its performance can degrade heavily with many conflicts in which constraints are violated.~\Cref{fig:conflicts} 
illustrates the generation and resolution of conflicts in our MAPF-TN problem.

For scalability, we use a bounded sub-optimal variant of
CBS called \emph{Enhanced CBS} (ECBS), which achieves orders of magnitude speedups over CBS~\cite{barer2014suboptimal}.
ECBS uses bounded sub-optimal \emph{Focal Search}~\cite{pearl1982studies}
at both levels, instead of best-first A*~\cite{hart1968formal}. Focal search
allows using an inadmissible heuristic that prioritizes efficiency.
We now describe a crucial modification to ECBS required for MAPF-TN. 

\textbf{Focal Weight-constrained Search:}
Unlike typical MAPF, the low-level graph search in MAPF-TN has a path-wide constraint (traversal distance)
\emph{in addition to the objective function} of traversal time. 
For the shortest path problems on graphs, adding a path-wide constraint
makes it NP-hard~\cite{garey1990computers}.
Several algorithms for constrained search
require an explicit enumeration of the edges~\cite{dumitrescu2003improved,carlyle2008lagrangian}.
We extend the A* for MultiConstraint Shortest Path (A*-MCSP) algorithm~\cite{li2007fast} (suitable for our implicit graph)
to focal search (called Focal-MCSP). Focal-MCSP uses admissible heuristics on both objective and constraint
and maintains only non-dominated paths to intermediate nodes. This extensive book-keeping
requires a careful implementation for efficiency.

Focal-MCSP inherits the properties of A*-MCSP and Focal Search; therefore, it yields a bounded-suboptimal feasible path 
to the target. Accordingly, \textbf{ECBS with Focal-MCSP
yields a bounded sub-optimal solution to MAPF-TN}. The result follows
from the analysis of ECBS~\cite{barer2014suboptimal}.
Also, note that a $\depot \package \depot^{\prime}$ path requires a bounded sub-optimal
path from $\depot$ to $\package$ and another from $\package$ to $\depot^{\prime}$, such that their concatenation
is feasible. Since this is even more complicated, in practice, we run Focal-MCSP twice (from $\depot$ to $\package$ and $\package$ to $\depot^{\prime}$)
with half the energy constraint each time and concatenate the paths, guaranteeing feasibility.
In \supp{\ref{sec:appendix-speedup}} we discuss other required modifications to standard MAPF and important speedup techniques
that nonetheless retain the bounded sub-optimality of Enhanced CBS for our MAPF-TN formulation.

\begin{table}[t]
\caption{
The mean computation time for \alg in seconds, over $100$ different trials for each setting.
\alg is polynomial in input size and highly scalable. Here, $k = | \packageVerts |$ is 
the number of package deliveries and $\ell = |\depotVerts|$ is the number of depots.
For all instances that took longer than \SI{60}{\second}, only one trial was used.}
\centering
\begin{tabular}{@{} lrrrrr @{}}
    \toprule
    $k$ & $\ell=2$ & $\ell = 5$ & $\ell = 10$ & $\ell = 20$ & $\ell = 30$\\
    \midrule
    $50$   & $0.004$ & $0.016$   & $0.057$ & $0.248$  & $0.658$  \\
    $100$  & $0.012$ & $0.050$   & $0.195$ & $0.807$  & $2.117$  \\
    $200$  & $0.038$ & $0.173$   & $0.699$ & $2.968$  & $8.409$  \\
    $500$  & $0.201$ & $1.025$   & $4.384$ & $25.04$  & $85.01$  \\
    $1000$ & $0.781$ & $4.109$   & $24.30$ & $122.3$  & $322.5$ \\
    $5000$ & $23.97$ & $238.9$   & $1031$  & $2192$   & $5275$ \\
    \bottomrule
\end{tabular}
\label{table-alloc}
\vspace{-10pt}
\end{table}

\begin{table*}[th]
\centering
\caption{(All times are  in seconds) An extensive analysis of the MAPF-TN layer, on $100$ trials for each setting of depots and agents (and $30$ trials 
  for  $5$ depots and $50$ agents). Each trial uses different randomly generated depots and delivery locations.  The integer carrying capacity of any transit edge $\capacity(e)$
was randomly chosen from $\{3,4,5\}$ (single and double-buses). The sub-optimality factor for ECBS was $1.1$. For settings with $\nicefrac{m}{\ell} = 10$, a number of
trials timed out (over \SI{180}{s}) and were discarded.}
\begin{tabular}{@{} lrrrrcrrrr  @{}}
  \toprule
  & \multicolumn{4}{c}{\textbf{San Francisco} $\left(\lvert \transitVerts \rvert = 4192\  ; \text{Area } \SI{150}{\kilo \metre \squared}\right)$} & \phantom{a} & \multicolumn{4}{c}{\textbf{Washington DC} $\left(\lvert \transitVerts \rvert = 7608; \text{Area } \SI{400}{\kilo \metre \squared}\right)$}\\
  \cmidrule{2-5} \cmidrule{7-10} $\{$Depots, Agents$\}$&    $\{$Median, Avg$\}$ & $\{\text{Avg, Max}\}$ & $\{\text{Avg, Max}\}$ & Avg Soln. & &  $\{$Median, Avg$\}$ & $\{\text{Avg, Max}\}$   & $\{\text{Avg, Max}\}$  & Avg Soln.\\
                                   $\{\ell,m\}$          &  Plan  Time & Range Ext. & Transit Used & Makespan & & Plan Time  & Range Ext. & Transit Used & Makespan\\
  \midrule
  $\{5, 10\}$    &  $\{0.61, 1.17\}$    & $\{1.53, 3.41\}$   & $\{2.93, 6\}$   & $2554.7$  & &  $\{3.91, 5.65\}$ & $\{1.66, 3.08\}$  & $\{3.18, 7\}$  & $5167.3$\\
  $\{5, 20\}$    &  $\{1.39, 2.13\}$    & $\{1.61, 2.66\}$   & $\{3.48, 6\}$   & $2886.8$  & &  $\{9.01, 13.1\}$ & $\{1.79, 3.21\}$  & $\{3.57, 8\}$  & $5384.5$\\
  $\{5, 50\}$    &  $\{2.13, 3.89\}$    & $\{1.64, 2.48\}$   & $\{4.2, 6\}$   & $3380.9$  & &  $\{19.1, 28.9\}$ & $\{2.07, 3.21\}$  & $\{\textcolor{red}{\mathbf{4.44}}, 7\}$  & $6140.2$\\
  $\{10, 20\}$   &  $\{0.41, 1.02\}$    & $\{1.24, 2.35\}$   & $\{2.31, 6\}$   & $2091.6$  & &  $\{1.61, 4.67\}$ & $\{1.37, 3.12\}$  & $\{2.57, 7\}$  & $4017.2$\\
  $\{10, 50\}$   &  $\{0.73, 1.46\}$    & $\{1.38, \textcolor{red}{\mathbf{3.58}}\}$   & $\{2.94, 5\}$   & $2504.7$  & &  $\{4.77, 15.8\}$ & $\{1.72, 3.03\}$  & $\{3.53, 7\}$  & $5312.3$\\
  $\{10, 100\}$  &  $\{2.09, 7.29\}$    & $\{1.43, 2.16\}$   & $\{3.67, 8\}$   & $2971.8$  & &  $\{18.1, 26.2\}$ & $\{1.86, 3.18\}$  & $\{4.25, 8\}$  & $5623.9$\\  
  $\{20, 50\}$   &  $\{0.17, 0.46\}$    & $\{0.98, 1.69\}$   & $\{1.09, 7\}$   & $1273.6$  & &  $\{0.73, 1.92\}$ & $\{1.29, 2.88\}$  & $\{2.23, 7\}$  & $3571.8$\\
  $\{20, 100\}$  &  $\{0.49, 1.05\}$    & $\{1.06, 1.79\}$   & $\{1.61, \textcolor{red}{\mathbf{9}}\}$   & $1642.4$  & &  $\{2.45, 5.24\}$ & $\{1.48, 2.67\}$  & $\{3.19, 6\}$  & $4304.5$\\
  $\{20, 200\}$  &  $\{0.89, 2.10\}$    & $\{1.13, 2.31\}$   & $\{2.23, 6\}$   & $1898.5$  & &  $\{4.68, 10.5\}$ & $\{1.61, 2.87\}$  & $\{3.58, 7\}$  & $5085.6$\\
  \bottomrule
\end{tabular}
\label{table-mapf-res}
\vspace{-10pt}
\end{table*}

\section{Experiments and Results}
\label{sec:results}

We implemented our approach using the Julia language and tested it on a machine with a $6$-core \SI{3.7}{\giga\hertz} \SI{16}{\gibi\byte} RAM CPU.\footnote{The code for our work is available at \url{https://github.com/sisl/MultiAgentAllocationTransit.jl}.}
For very large combinatorial optimization problems, solution quality and algorithm efficiency
are of interest.
We have already shown that the upper and lower layers are near-optimal and bounded-suboptimal respectively in
terms of solution quality, i.e., makespan. 
Therefore, for evaluation we focus on their efficiency
and scalability to large real-world settings.
We do not attempt to baseline against a MILP approach for the full problem;
we estimate that a typical setting of interest will have on the order of $10^7$ variables 
in a MILP formulation, besides exponential constraints.

We ran simulations with two large-scale public transit networks in San Francisco (SFMTA)
and the Washington Metropolitan Area (WMATA).
We used the open-source General Transit Feed Specification data~\cite{gtfs} for each network.
We considered only the bus network 
(by far the most extensive), 
but our formulation can accommodate multiple modes.
We defined a geographical bounding box in each case, of area \SI{150}{\kilo \metre \squared} for SFMTA
and \SI{400}{\kilo \metre \squared} for WMATA (illustrated in \supp{\ref{sec:appendix-results}}),
within which depots and package locations were randomly generated.
For the transit network, we considered all bus trips that operate within the bounding box.
The \emph{size} of the time-expanded network, $\lvert \transitVerts \rvert$, is the total number of stops made by all trips;
$\lvert \transitVerts \rvert = 4192$ for SFMTA and $\lvert \transitVerts \rvert = 7608$ for WMATA (recall that edges are
implicit, so $\lvert E_{TN} \rvert$ varies between queries, but the full graph $G_O$ can be dense).
The drone's flight range constraint is set (conservatively) to $\SI{7}{\kilo\metre}$ and average speed to $\SI{25}{kph}$,
based on the DJI Mavic 2 specifications~\cite{Dji}.
In this section, we evaluate the two main components --- the task allocation
and multi-agent path finding layers. In \supp{\ref{sec:appendix-results}} we compare
the performance of two replanning strategies for when a drone finishes its current delivery, and
two surrogate travel time estimates for coupling the layers.

\subsection{Task Allocation}
\label{sec:results-alloc}

The scale of the allocation problem is determined by the
number of depots and packages, 
i.e., $\ell + k$.
The runtimes for \alg with varying $\ell, k$ over SFMTA are displayed in~\Cref{table-alloc}. %
The roughly quadratic increase in runtimes along a specific row or column
demonstrate that our provably near-optimal \alg algorithm is indeed
polynomial in the size of the input.
Even for up to $5000$ deliveries,
the absolute runtimes are quite reasonable.
We do not compare with naive MILP even for allocation,
as the number of variables would exceed $(\ell \cdot k)^2$, in addition to
the expensive subtour elimination constraints~\cite{miller1960integer}.

\subsection{MAPF with Transit Networks (MAPF-TN)}
\label{sec:results-ecbs}

Solving multi-agent path finding optimally is NP-hard~\cite{yu2013structure}.
Previous research has
benchmarked CBS variants and shown that Enhanced CBS
is most effective~\cite{barer2014suboptimal,cohen2016improved}. Therefore, we focus on extensively evaluating our own approach rather than
redundant baselining.~\Cref{table-mapf-res} quantifies several aspects 
of the MAPF-TN layer with varying numbers of depots $(\ell)$ and agents $(m)$, the two most tunable parameters. 
Before each trial, we run the allocation layer and collect $m$ $\depot \package \depot^{\prime}$ tasks, one for each agent. We then run the MAPF-TN solver
on this set of tasks to compute a solution.

We discuss broad observations here and provide a detailed analysis in
\supp{\ref{sec:appendix-results}}. The results are very promising; our approach scales to large numbers of agents ($200$)
and large transit networks (nearly $8000$ vertices); the highest average
makespan for the true delivery time is less than an hour (\SI{3380.9}{\second}) for SFMTA and
2 hours (\SI{6140.2}{\second}) for WMATA; drones are using up to $9$ transit options per route
to extend their range by up to $3.6$x. As we anticipated, \textbf{conflict resolution is a major bottleneck of MAPF-TN}.
A \emph{higher ratio of agents to depots} increases conflicts due
to shared transit, thereby increasing plan time (compare $\{5,20\}$ to $\{10,20\}$).
\emph{A higher number of depots} puts more
deliveries within flight range of a depot, reducing conflicts, makespan, and the need for
transit usage and range extension (compare $\{10,50\}$ to $\{20,50\}$).
Plan times are much higher for WMATA due to a larger area and a larger and less uniformly
distributed bus network, leading to higher single-agent search times and more multi-agent conflicts.
Trials taking more than $3$ minutes were discarded;
two pathological cases with SFMATA and WMATA (each with $\{l = 10, m = 100\}$) took nearly $4$ and $8$ minutes,
due to $30$ and $10$ conflicts respectively. In any case, a deployed system would have
better compute and parallelized implementations. 
Finally, note that the running times reported here are actually pessimistic, because we consider
cases where drones are released simultaneously from the depots, which increases conflicts.
However, a gradual release by executing the MAPF solver over a longer horizon (as we discuss in \supp{\ref{sec:appendix-results-replanning}}) results
in fewer conflicts, allowing us to cope with an even larger drone fleet.

\section{Conclusion and Future Work}
\label{sec:conclusion}

We designed a comprehensive algorithmic framework for solving the highly challenging problem
of multi-drone package delivery with routing over transit networks.
Our two-layer approach is efficient and highly scalable to large
problem settings and obtains high-quality solutions that satisfy the many
system constraints. We ran extensive simulations with two real-world transit networks that 
demonstrated the widespread applicability of our framework and how using ground transit judiciously
allows drones to significantly extend their effective range.

A key future direction is to perform case studies that estimate the operational cost of our framework, evaluate its impact
on road congestion, and consider potential externalities like noise pollution and 
disparate impact on urban communities. Another direction is to extend our model to overcome its limitations: delays
and uncertainty in the travel pattern of transit vehicles~\cite{muller2007timetable} and delivery time windows~\cite{solomon1987algorithms}; jointly routing
ground vehicles and drones; optimizing for the placements of depots, whose locations are currently
randomly generated and given as input.

\section*{Acknowledgments}
This work was supported in part by NSF, Award Number: 1830554, the Toyota Research Institute (TRI), and the Ford Motor Company. The authors thank Sarah Laaminach, Nicolas Lanzetti, Mauro Salazar, and Gioele Zardini for fruitful discussions on transit networks.

\bibliographystyle{plainnat}
\bibliography{bibliography}

\appendices

\section{Task Allocation: Additional Details and Proofs}
\label{sec:appendix-allocation}
We present a full and extended version of the \alg algorithm (\Cref{alg:mst}) for the task allocation layer.~\Cref{fig:alloc}
illustrates the behaviour of \mct, which provides an approximate solution for the $m$-MVP problem (\Cref{def:mvp}).
The algorithm consists of three main steps:  \vspace{5pt}

\begin{figure*}[t]
  \centering
  \begin{subfigure}{0.325\textwidth}
    \centering
    \fbox{\includegraphics[width=0.97\textwidth]{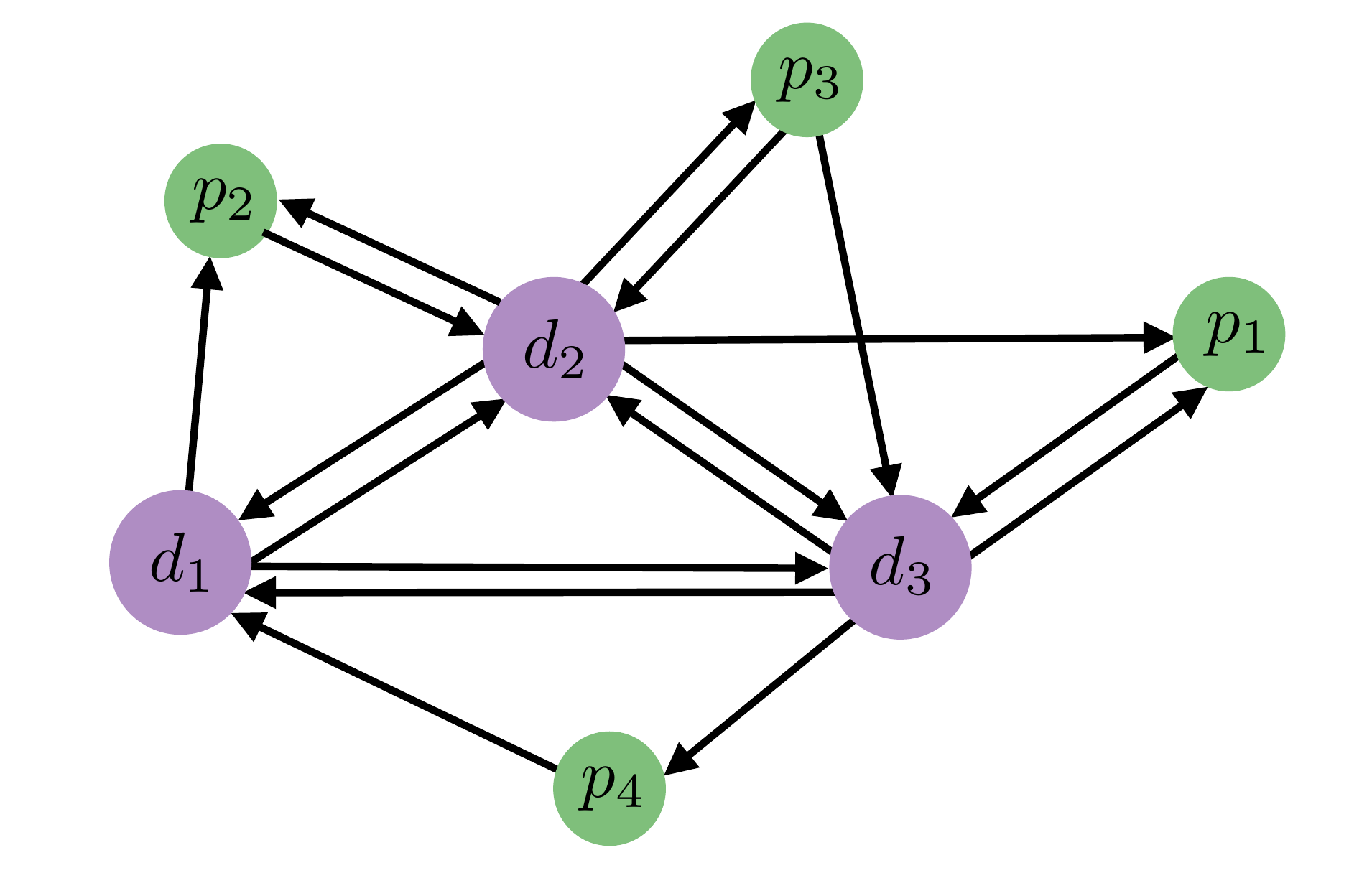}}
    \caption{}
    \label{fig:alloc-graph}
  \end{subfigure}
  \begin{subfigure}{0.325\textwidth}
    \centering
    \fbox{\includegraphics[width=0.97\textwidth]{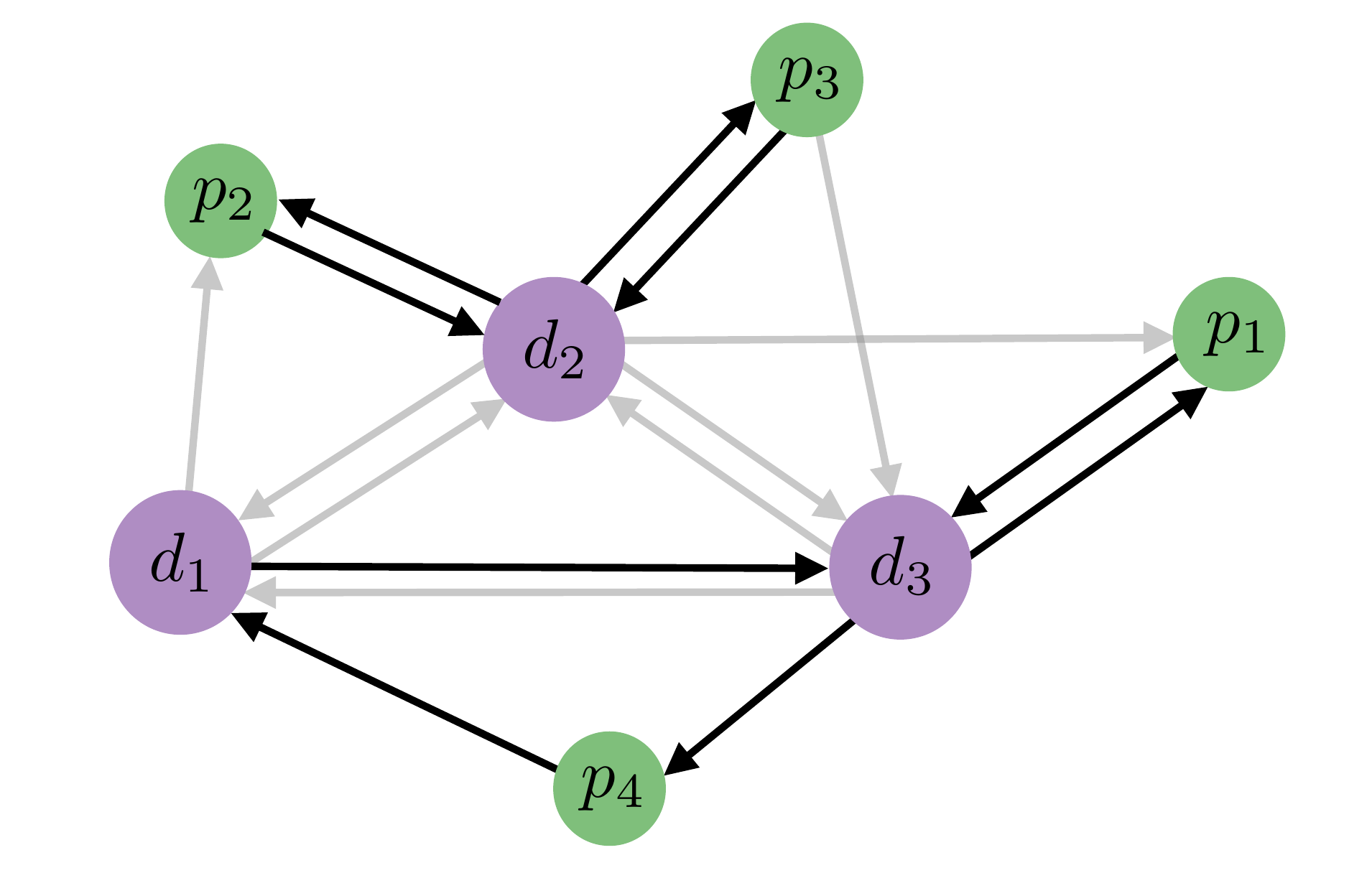}}
    \caption{}
    \label{fig:alloc-mct}
  \end{subfigure}
  \begin{subfigure}{0.325\textwidth}
    \centering
    \fbox{\includegraphics[width=0.97\textwidth]{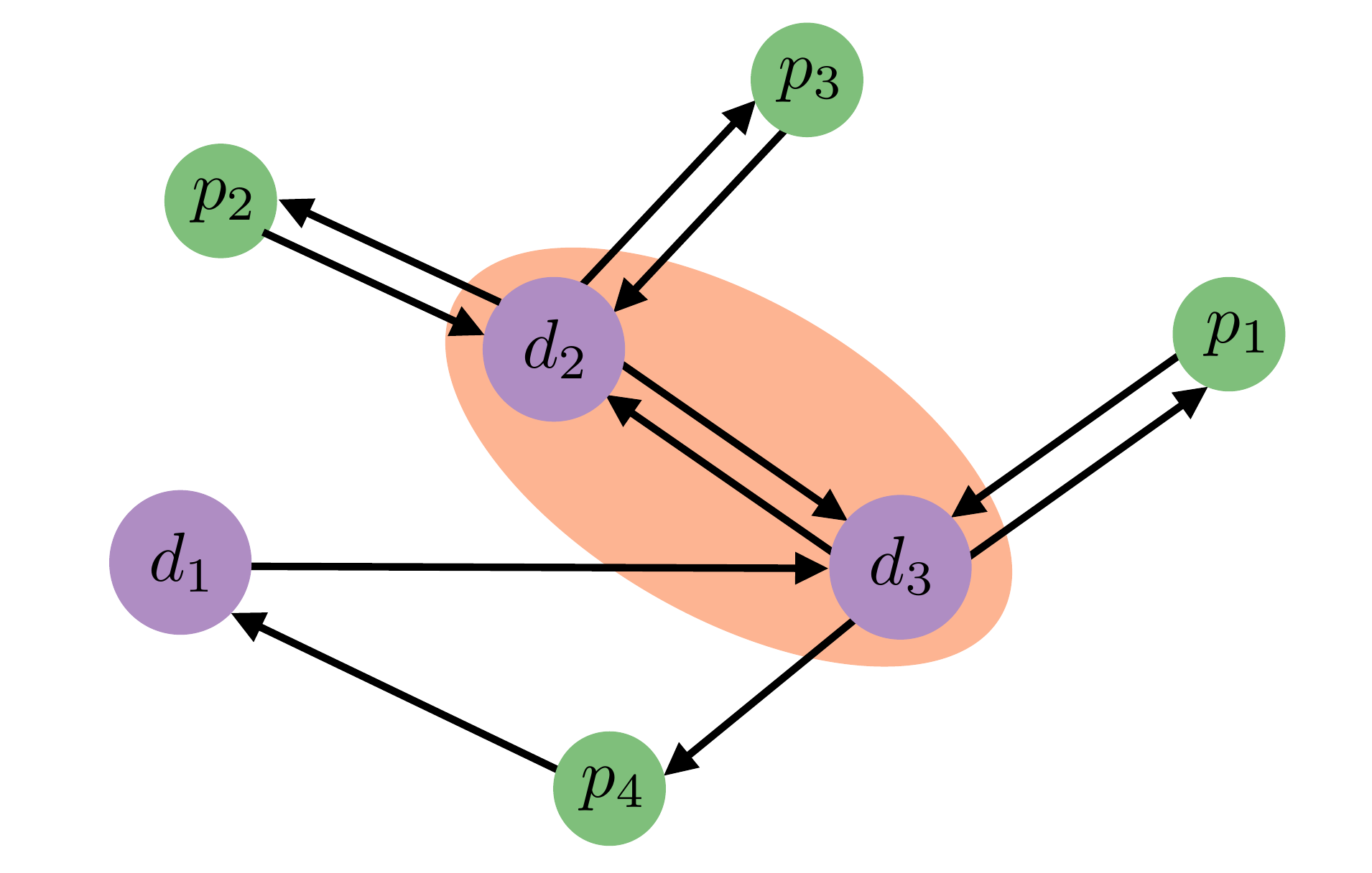}}
    \caption{}
    \label{fig:alloc-cluster}
  \end{subfigure}
  \caption{Three key steps in the \alg algorithm for delivery sequence allocation: (\subref{fig:alloc-graph}) The allocation graph
  is defined for the depots $\depot_{1:3}$ and packages $\package_{1:4}$. (\subref{fig:alloc-mct}) The \mct step yields a solution
  that connects each package delivery with the depot from which the drone is dispatched and the depot to which it returns.
  (\subref{fig:alloc-cluster}) A tour merging steps merges the depots $\depot_2$ and $\depot_3$ into a single cluster.}
  \label{fig:alloc}
\end{figure*}

\noindent\textbf{Step 1 (lines~1): } Generate a collection of $t$ tours $T_1,\ldots,T_t$, for some $1\leq t\leq k$, such that every package $p\in V_P$ is covered by exactly one tour, and the total distance of the tours is minimized. This step is achieved by solving the minimal-connecting tours (\mct) problem (see~\Cref{tbl:mct}). The solution to \mct is given by an assignment $\{x_{uv}\}_{(u,v)\in \dE}$, which indicates which edges of $G$ are used and 
for how many times. This assignment implicitly represents the desired collection of tours $T_1,\ldots, T_t$, as described above. The reason behind why such an assignment breaks into a collection of tours is discussed in \Cref{lem:tours} below. \vspace{5pt}

\noindent\textbf{Step 2 (lines~2-10): } The  $T_1,\ldots,T_t$ tours are merged in an iterative fashion, until a single tour $T$ is generated. We first identify $t\geq 1$ connected depot sets $\dD=\{D_1,\ldots, D_t\}$, which are induced by the MCT solution (line~2). That is, every $D_i$ consists of all the depots that belong to one specific tour $T_i$ encoded by $\x$. We then perform a merging routine which merges the tours and consequently merges the connected depot sets. This routing iterates over all combinations of $D,D'\in \D, d\in D,d'\in D'$ (lines~5-8), and chooses $(d,d'),(d',d)$, such that $c_{dd'}+c_{d'd}$ is minimized. Then $\x$ and $\dD$ are updated accordingly (lines~9, 10). For a given $\dD$ and $d\in V_D$, the notation $\dD(d)$ represents the depot component $D\in \dD$ that $d$ belongs to. \vspace{5pt}

\noindent\textbf{Step 3 (lines~6-14):} The tour $T$ is partitioned into $m$ paths $\{P_1,\ldots,P_m\}$ such that the length of every path is proportional to the length of $T$ divided by $m$. Additionally, every path $P_i$ starts and ends in a depot, but not necessarily the same one. This step is reminiscent to an algorithm presented in~\cite{FredericksonETAL76} for $m$-TSP in undirected graphs. 

\begin{algorithm}
  \nonl \textbf{Input}: Allocation graph $G_A=(V_A,E_A)$, with $V_A=V_D\cupdot V_P$, number of agents $m\geq 1$\;
  \nonl \textbf{Output}: Paths $\{P_1,\ldots, P_m\}$, such that every package is visited exactly once\;
  $\x:=\{x_{uv}\}_{(u,v)\in E_A}\gets \textsc{mct}(G_A,V_P)$\;
  $\dD:=\{D_1\ldots,D_t\}\gets \textsc{ConnectedDepots}(G_A,\x)$\;
  \While{$|\dD|>1$}{
    $c_{\textup{min}}\gets\infty, d_{\textup{min}}\gets\emptyset, d'_{\textup{min}}\gets\emptyset$\;
    \For{$D,D'\in \dD, D\neq D', d\in D,d'\in D'$}{
      \If{$c_{dd'}+c_{d'd}<c_{\textup{min}}$}{
        $c_{\textup{min}}\gets c_{dd'}+c_{d'd}$, $d_{\textup{min}}\gets d, d'_{\textup{min}}\gets d'$\;
        }
      }
      $x_{d_{\textup{min}}d'_{\textup{min}}}\gets 1, x_{d'_{\textup{min}}d_{\textup{min}}}\gets 1$\;
    $\dD\gets \dD\setminus \{\dD(d_{\textup{min}}),\dD(d'_{\textup{min}})\}\cup \{\dD(d_{\textup{min}})\cup \dD(d'_{\textup{min}})\}$;\
    }
  $T:=(d_1,p_1,\ldots,p_{\ell-1},d_\ell)\gets \textsc{GetTour}(G_A,\x)$\;
  $i\gets 1$; $j\gets 1$\;
  \For{$i=0 \textup{ to } m$}{
    $P_i\gets \emptyset, L_i\gets 0$\;
    \While{$L_i\leq \textsc{length}(T)/m \textup{ \bf  and } \ell<t$}{
      $L_i\gets L_i+c_{d_jp_j}+c_{p_jd_{j+1}}$\;
      $P_i\gets P_i\cup \{(d_j,p_j),(p_j,d_{j+1})\}$\;
      $j\gets j+1$\;
      }
    }
  \Return $\{P_1,\ldots, P_m\}$\;
  \caption{\textsc{MergeSplitTours-full}}
  \label{alg:mst}
\end{algorithm} 
\subsection{Completeness and optimality}
In preparation to the proof \Cref{thm:complete},  we have the following lemma, which states that \mct produces a collection of pairwise-disjoint tours. Henceforth we assume that that $G_A$ is strongly connected and that  $G_A(V_D)$ is a directed clique.

\begin{lemma}\label{lem:tours}
Let $\x$ be the output of $\mct(G_A,V_P)$. Then there exists a collection of vertex-disjoint tours $T_1,\ldots, T_{m'}$, such that for every $(u,v)\in E_A$ such that $x_{uv}> 0$, there exists $T_i$ in which $(u,v)$ appears exactly $x_{uv}$ times. 
\end{lemma}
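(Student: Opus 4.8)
The plan is to reinterpret the \mct solution $\x$ as a balanced directed multigraph and then invoke the directed Euler theorem componentwise. Concretely, I would build the multigraph $H$ on vertex set $V_A$ that contains, for every ordered pair $(u,v)\in E_A$, exactly $x_{uv}$ parallel copies of the arc $u\to v$. The first step is to check that every vertex of $H$ is \emph{balanced}, i.e.\ its in-degree equals its out-degree. For a package $p\in V_P$ this is immediate from \cref{eq:mct:out}: since in $G_A$ a package is adjacent only to depots, the in-degree of $p$ in $H$ is $\sum_{d\in N_+(p)}x_{dp}=1$ and its out-degree is $\sum_{d\in N_-(p)}x_{pd}=1$. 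For a depot $d\in V_D$ the balance condition is exactly \cref{eq:mct:equal}. I would note in passing that each package has in- and out-degree precisely $1$ in $H$, so it is incident to arcs and hence lies in a non-trivial part of $H$.

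Next I would pass to the weakly connected components $C_1,\dots,C_{m'}$ of $H$ that contain at least one arc. These are pairwise vertex-disjoint (they are blocks of a partition of $V_A$), and every arc with $x_{uv}>0$ — as well as every package — lies in exactly one $C_i$. The core step is to extract one closed walk per component. For this I rely on the directed Euler theorem: a weakly connected, balanced digraph admits an Eulerian circuit. It also happens to be strongly connected, which I would justify by a one-line condensation argument: in the DAG of strongly connected components, a source component $S$ satisfies, by summing the local in/out-degree equalities over its vertices, $\#\{\text{arcs leaving }S\}=\#\{\text{arcs entering }S\}=0$, so $S$ is isolated in the condensation and hence equals the whole weakly connected component. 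Let $T_i$ be an Eulerian circuit of $C_i$; it is a tour — a closed walk that may revisit depots — and it uses each arc of $C_i$ exactly once, hence traverses the ordered pair $(u,v)$ exactly $x_{uv}$ times for every $(u,v)$ with $x_{uv}>0$ in $C_i$.

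Finally I would assemble the statement: the tours $T_1,\dots,T_{m'}$ are vertex-disjoint because the $C_i$ are, and since all $x_{uv}$ copies of an arc $(u,v)$ with $x_{uv}>0$ belong to the single component containing $u$ and $v$, that arc appears exactly $x_{uv}$ times in the corresponding $T_i$ and zero times in every other $T_j$.

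I do not anticipate a genuine obstacle; the one delicate point is the precise hypothesis of the directed Euler theorem, which requires all arcs to lie in a single strongly connected component rather than mere weak connectivity — handled by the short condensation argument above (an equivalent route is to decompose $H$ into edge-disjoint simple cycles and splice together those sharing a vertex, but the Eulerian-circuit formulation is cleaner). A minor conceptual point worth flagging is that ``tour'' must be read here as a closed walk, not a simple cycle: depots that serve several deliveries force several simple cycles through a common depot, and it is precisely the Eulerian circuit of the whole component that merges them into one tour, consistent with how Step~2 of \alg later merges tours by inserting depot arcs $(d,d'),(d',d)$.
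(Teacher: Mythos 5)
Your proof is correct and follows essentially the same route as the paper's: verify in/out-degree balance at packages via \cref{eq:mct:out} and at depots via \cref{eq:mct:equal}, then extract Eulerian circuits. In fact your version is more complete than the paper's one-line argument, since you make explicit the componentwise decomposition (which is what actually yields a \emph{collection} of vertex-disjoint tours rather than a single one) and the strong-connectivity hypothesis of the directed Euler theorem, both of which the paper elides.
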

\begin{proof}
  By definition of \mct, for every $p\in V_p$ there exists precisely one incoming edge $(d,p)$ and one outgoing edge $(p,d')$ such that $x_{dp}=x_{pd'}=1$. Also, note that by~\Cref{eq:mct:equal}, the in-degree and out-degree of every $d\in V_D$ are equal to each other.
  Thus, an Eulerian tour can be formed, which traverses every edge $(u,v)$ exactly $x_{uv}$ times.
\end{proof}

We are ready for the main proof. 

\begin{proof}[Proof of Theorem~\ref{thm:complete}]
  First, note that after every iteration of the ``while'' loop, the updated assignment $\x$ still represents a collection of tours. Second, this loop is repeated at most $m-1$ times; $t$, which represents the initial number of connected depots (line~2), is at most $m$, since every tour induced by \mct must contain at least one depot. 

Next, let \opt be the optimal solution to $m$-MVP. That is, there exists $m$ paths $\{P^*_1,\ldots,P^*_m\}$ which represent the solution to $m$-MVP, and for every $i\in[m]$, $|P^*_i|\leq \opt$. Observe that 
\[\sum_{i=1}^{m}|P_i|\leq m\cdot \max_{i\in [m]}|P^*_i| =m\cdot \opt,\]
where $\{P_1,\ldots,P_m\}$ is the result of \alg. Next, by definition of $\alpha$, we have that $|T|\leq m\cdot \opt + m\alpha$. Lastly, by definition of $\beta$  we have that 
\[T_i\leq |T|/m +\beta \leq \opt +\alpha+\beta. \qedhere\] 
\end{proof}

\subsection{Computational complexity}
We conclude this section with an analysis of the computational complexity of \alg. Recall that we identified the main bottleneck of \alg to be the solution computation for \mct. We proceed to prove \Cref{lem:poly}, which states such a solution to \mct can be obtained via a linear relaxation. 

\begin{proof}
  The main observation to make is that $\mct$ can be transformed into a minimum-cost circulation (MCC) problem. If all edge capacities are integral, the linear relaxation of MCC enjoys a totally unimodular constraint matrix form~\cite{AhujaETAL93}. Hence, the linear relaxation will necessarily have an integer optimal solution, which will be a fortiori an optimal solution to the original MCF problem. 

In the current representation of \mct,~\cref{eq:mct:out} does not correspond to a circulation problem. However, we can introduce a small modification to $G_A$ which would allow us to recast it as circulation. Consider the graph $G'=(V',E')$ such that 
  \begin{align*}
  V':=&V_D\cup V_P\cup \{p'|p\in V_P\},\\
  E':=&\{(d,p)|d\in V_D, p\in V_P, (d,p)\in E\} \\ &\cup \{(p',d)|d\in V_D, p\in V_P,(p,d)\in E\} \\ & \cup \{(p,p')|p\in V_P\}
  \end{align*}
  and we require that for every such $(p,p')$, $x_{pp'}=1$.
\end{proof}

\section{MAPF-TN: Additional Details}

We now elaborate on two aspects of multi-agent pathfinding with transit networks (MAPF-TN)
that we alluded to in~\Cref{sec:mapf-tn}. First, we 
discuss how we extend the notion of conflict handling in Conflict-Based Search to the
capacity conflicts of MAPF-TN, where more than one agent can use a transit edge.
Second, we discuss two important speedup techniques that improve the empirical
performance of our MAPF-TN solver, without sacrificing bounded suboptimality.

\subsection{Capacity Conflicts in (E)CBS}
\label{sec:appendix-capacity}

In the classical MAPF formulation, at most one agent can occupy a particular vertex
or traverse a particular edge at a given time. Therefore, conflicts between
$p > 1$ agents yield $p$ new nodes in the multi-agent level search tree of 
Conflict-Based Search (CBS) and any of its modified variants.
In MAPF-TN, however, transit edges in general
have capacity $\capacity(e) \geq 1$.
Consider a solution generated during a run of Enhanced CBS that has assigned
to some transit edge $p > \capacity(e) > 1$ drones. 
In order to guarantee bounded sub-optimality of the solution, we must generate
all $\binom{p}{p - c}$ sets of constraints, where $c = \capacity(e)$.
Each such set of $(p - c)$ constraints represents one subset of
$(p - c)$ agents being restricted from using the transit edge in question.

As we pointed out in~\Cref{sec:results-ecbs}, conflict resolution is a significant bottleneck
for solving large MAPF-TN instances. 
In our experiments, we generated all constraint subsets of a capacity conflict, however, pathological scenarios may arise where this significantly degrades performance in practice
(our ultimate yardstick). Whether there exists a principled way to analyse constraint set enumeration and suboptimality and how this can be efficiently implemented in practice
are both important questions for future research.

\subsection{Speedup Techniques}
\label{sec:appendix-speedup}

The NP-hardness of multi-agent path finding~\cite{yu2013structure} and the additional computational challenges of MAPF-TN 
(path energy constraint; large and dense graphs)
make empirical performance paramount, given our real-world scenarios and emphasis on scalability. 
We now discuss some speedup techniques that improve the efficiency of the low-level search
while maintaining its bounded sub-optimality (which in turn ensures bounded sub-optimality of the overall solution,
as per Enhanced CBS). Certainly, these techniques are not exhaustive; there 
is an entire body of work in transportation planning devoted to speeding up 
algorithm running times~\cite{delling2009engineering}. We devised and implemented two
simple methods.

\subsubsection{Preprocessing Public Transit Networks}
Focal-MCSP can become a bottleneck when it has to be run multiple times 
(at least twice for each agent's current $\depot \package \depot^{\prime}$ task
and more in case of conflicts).
Its performance depends significantly on the availability and quality of admissible heuristics,
i.e., heuristics that \emph{underestimate} the cost to the goal, for the 
objective (elapsed time) and constraint (distance traversed, a surrogate for
the energy expended).
The public-transit network for a given area is usually known in advance and follows a pre-determined
timetable. We can analyze and preprocess 
such a network to obtain admissible heuristics.
These can then be used for multiple instances of MAPF-TN throughout the day,
while searching for paths to a specific package delivery location.

For the objective function, i.e., the elapsed time, a lower bound is typically the time to fly
directly to the goal, without deviating and waiting to board public transit
(of course, taking such a route in practice is usually infeasible due to the
distance constraint). Therefore, we define the heuristic simply as:
\begin{equation}
\label{eq:time-heur}
h_{\timeCost}(v,vg) = \frac{\lVert vg - v \rVert}{\speed}
\end{equation}
where $\speed$ is the average drone speed, $vg$ is the goal node and $v$ is the node being expanded.
The above heuristic will be admissible, i.e., be a lower
bound on elapsed time if the average drone speed is higher than average
transit speed. This assumption is typically
true for the transit vehicles we consider, given that they are
required to wait at stops for people to get on.
A more data-driven estimate can be obtained by analyzing actual flight times,
but that is out of the scope of this work.

For the constraint function, i.e., the distance traversed, we use a
heuristic based on extensive network preprocessing.
For a given transit network in the area of operation, we consider the minimal time window such that
every instance of a transit vehicle 
trip in that network can start and finish (as per the timetable).
We then create the so-called \emph{trip metagraph}, whose set
of vertices is $\depotVerts \cupdot \packageVerts \cupdot V_{\T}$,
where, from our earlier notation, $\depotVerts$ and $\packageVerts$ are the sets of depot
and package vertices respectively. Each vertex $v_{\tau} \in V_{\T}$ represents a single transit
vehicle trip $R_{\tau}$, and encodes its sequence of time-stamped stops (we will discuss what this
means in practice shortly). The trip metagraph is complete, i.e., there
is an edge between every pair of vertices.

We now define the cost of energy expenditure, i.e., distance traversed for each edge $e = (u,v)$, hereafter denoted as 
$e = (u \rightarrow v)$, in the trip metagraph. 
If $u,v \in V_{\T}$ correspond to trips $R_{\tau}$ and $R_{\tau'}$ respectively,
\begin{equation*}
\begin{aligned}
\energy(e) &= \min_{u \in R_{\tau}, v \in R_{\tau'}} \lVert v - u \rVert, \text{such that }\\
& \speed \times (v.t - u.t) \geq \lVert v - u \rVert,
\end{aligned}
\end{equation*}
where, as before, $v.t$ refers to the time-stamp of the stop $v$ for that particular trip.
The edge cost here is thus \emph{the shortest distance between stops that can be traversed by the drone in the difference
between time stamps}.
If $u,v \in \depotVerts \cupdot \packageVerts$, we simply set $\energy(e) = \lVert v - u \rVert$,
the direct flight distance between the locations. For all other edges, i.e., where one of $u$ or $v$
corresponds to a trip $R_{\tau}$ and the other to a depot or package location (in either direction),
we set
\[
\energy(e) = \min_{u \in R_{\tau}} \lVert v - u \rVert
\]
and in such cases, the cost for edge $(v \rightarrow u)$ is equal to that of $(u \rightarrow v)$. 
This concludes the assignment of edge costs.

Given the complete specification of the edge cost function,
we now run 
Floyd-Warshall's algorithm~\cite{CormenETAL09} on the trip metagraph to get 
a cost matrix $\bar{\energy}_{\T}$, where
$\bar{\energy}_{\T}(u, v)$ is the cost of the shortest-path from $v_{\tau}$ to $v_{\tau'}$ 
on the trip metagraph. Intuitively, this cost matrix encodes the \emph{least flight distance} required
to switch from one trip to another, from a trip to a depot/package and vice versa, and between two
depots/package locations, either using the transit network or flying directly, whichever is shorter.

We can now define the goal-directed heuristic function $h_{\energy}$
for the distance traversed. 
Let the goal node for a query to Focal-MCSP be $vg \in \depotVerts \cupdot \packageVerts$.
We want the heuristic value for the operation graph node $v \in V_O \equiv \left(\depotVerts \cupdot \packageVerts \cup \transitVerts \right)$ that is
expanded during Focal-MCSP. If $v \in \depotVerts \cupdot \packageVerts$ is a depot or package, we set
$h_{\energy}(v,vg) = \bar{\energy}_{\T}(v,vg)$.
Otherwise, $v \in \transitVerts$ is a transit vertex.  Recall that each transit vertex is a stop that is associated with a corresponding
transit trip. Let the trip associated with $v \in \transitVerts$ be $R_{\tau}$. We then set
$h_{\energy}(v,vg) = \bar{\energy}_{\T}(v_{\tau},vg)$, where $v_{\tau} \in V_{\T}$ is the trip metagraph
vertex corresponding to the trip $R_{\tau}$.
The heuristic $h_{\energy}$ as defined above is admissible, i.e. is a lower bound on the drone's flight distance
from the expanded operation graph node
to the target depot/package location.

In practice, we will solve several instances of MAPF-TN throughout a day, with traffic delays and other disruptions to the timetable.
However, the handling of dynamic networks and timetable delays is a separate subfield of research in transportation planning~\cite{delling2009engineering,bast2016route}
and out of the scope of this work. We make the reasonable assumption (made often in transit planning work) that travel times between locations
do not vary greatly throughout the day, and we ignore the effect of delays and disruptions to the pre-determined timetable
while using our heuristics.

\begin{figure*}[t]
  \centering
  \begin{subfigure}{0.49\textwidth}
    \centering
    \includegraphics[width=0.95\columnwidth]{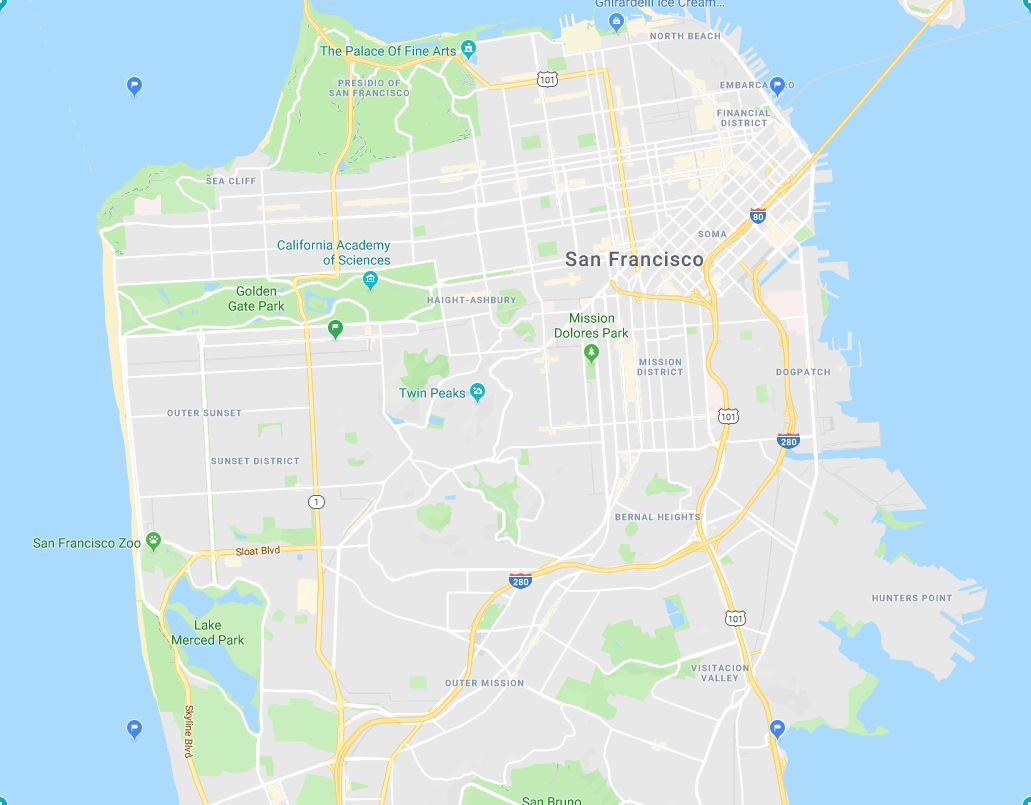}
    \caption{}
    \label{fig:sf}
  \end{subfigure}
  \begin{subfigure}{0.49\textwidth}
    \centering
    \includegraphics[width=0.9\columnwidth]{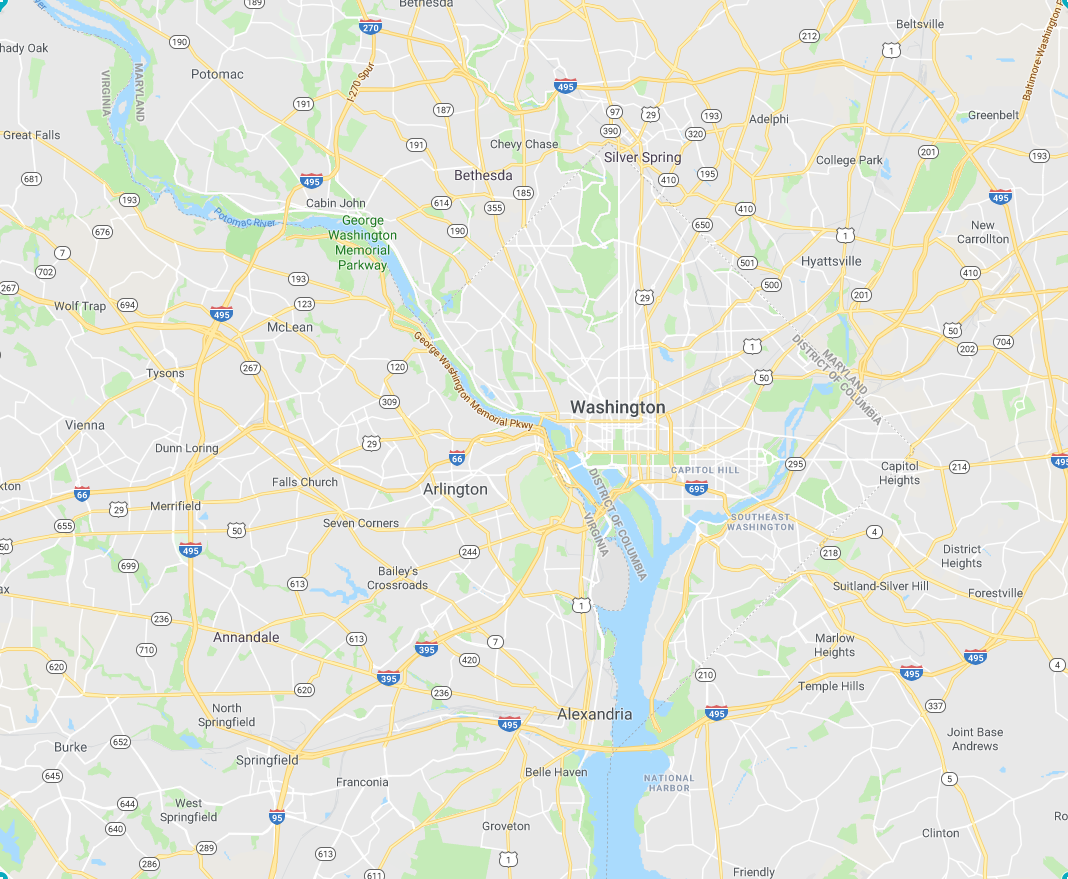}
    \caption{}
    \label{fig:wdc}
  \end{subfigure}
  \caption{The geographical bounding boxes for (\subref{fig:sf}) San Francisco (roughly \SI{150}{\kilo \metre \squared}) and the (\subref{fig:wdc}) Washington DC Metropolitan area (roughly \SI{400}{\kilo \metre \squared}).}
  \label{fig:maps}
\end{figure*}

\subsubsection{Pruning Focal-MCSP search space}

As we mentioned in~\Cref{sec:mapf-tn-formulation}, the edges of the operation graph
are not explicitly enumerated but rather implicitly encoded and generated just-in-time
during the node expansion stage of Focal-MCSP. An implicit edge set makes the Focal-MCSP search 
highly memory-efficient
by only having to store the vertices of the operation graph. This memory-efficiency comes at the
cost of computation time as the outgoing edges of a vertex must be computed during the search.
A careful observation of the transit vertices allows us to prune the set of out-neighbors
of a vertex expanded during Focal-MCSP, \emph{while still guaranteeing bounded sub-optimality}.

Let $u \in V_{O}$ be an operation graph vertex that is expanded during Focal-MCSP. Consider all the
transit vertices of a transit trip $R_{\tau}$ (if $u \in \transitVerts$ is itself a transit vertex,
then consider a trip different from the trip that $u$ lies on).
Those transit vertices are candidate out-neighbors for the expanded node $u$ (candidate target vertices of a 
\emph{time-constrained flight edge} emanating from $u$ and making a connection
to the trip $R_{\tau}$).
It may appear that all trip stops in $R_{\tau}$ that
the drone can reach in time, i.e., for which $\speed \times (v.t - u.t) \geq \lVert v - u \rVert$ (a required condition, as we mentioned 
in~\Cref{sec:mapf-tn-formulation}) should be added as out-neighbors.

However, while considering connections to a trip $R_{\tau}$, we actually need to only add the transit vertices 
on $R_{\tau}$ that are \emph{non-dominated}
in terms of the tuple of time difference and flight distance $\left(v.t - u.t, \lVert v - u \rVert\right)$.
Doing so will continue to ensure bounded sub-optimality of Focal-MCSP. We formalize this observation
through a lemma.

\begin{lemma}
Let $u \in V_O$ be an operation graph vertex expanded during Focal-MCSP. While considering time-constrained
flight connections to trip $R_{\tau}$, let $v_1$ and $v_2$ be two consecutive transit vertices on trip $R_{\tau}$ such that
$\left(v_1.t - u.t, \lVert v_1 - u \rVert\right) \preceq \left(v_2.t - u.t, \lVert v_2 - u \rVert\right)$.
Then, pruning $v_2$ as an out-neighbor has no effect on the solution of Focal-MCSP.
\end{lemma}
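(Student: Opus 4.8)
The plan is to argue that any feasible Focal-MCSP path that uses the pruned vertex $v_2$ as the landing point of a time-constrained flight connection to $R_{\tau}$ can be rerouted through $v_1$ without increasing either the objective (elapsed time) or the constraint value (flight distance accumulated so far), and without affecting feasibility of the remaining suffix of the path. Since Focal-MCSP maintains only non-dominated labels at each node and its bounded-suboptimality guarantee (inherited from A*-MCSP and Focal Search) depends only on exploring a set of labels that dominates the true non-dominated frontier, removing $v_2$ as an out-neighbour of $u$ cannot worsen the returned solution.

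First I would set up the rerouting. Suppose $\pi$ is a path generated during Focal-MCSP that, at node $u$, takes the time-constrained flight edge $(u,v_2)$ into trip $R_{\tau}$ and then continues along $R_{\tau}$ (and possibly beyond). Because $v_1$ and $v_2$ are \emph{consecutive} stops on $R_{\tau}$ with $v_1$ earlier, the drone that lands at $v_1$ can ride the transit edge $(v_1,v_2)$ — which has $\energy=0$ and traversal time exactly $v_2.t - v_1.t$ — and thereby be at $v_2$ at exactly the same time-stamp as if it had flown directly to $v_2$. Hence replacing the single edge $(u,v_2)$ by the two-edge segment $(u,v_1),(v_1,v_2)$ yields a path $\pi'$ with the \emph{same} arrival time-stamp at $v_2$, so the suffix of $\pi$ after $v_2$ remains attachable and feasible (all downstream time-constrained edges are unaffected).

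Next I would compare the two resource vectors. For elapsed time: along $\pi$ the edge $(u,v_2)$ contributes $v_2.t - u.t$; along $\pi'$ the two edges contribute $(v_1.t - u.t) + (v_2.t - v_1.t) = v_2.t - u.t$, so the times are equal. For flight distance: along $\pi$ we accumulate $\lVert v_2 - u \rVert$, while along $\pi'$ we accumulate only $\lVert v_1 - u \rVert$ (the transit edge $(v_1,v_2)$ has zero energy cost). By the domination hypothesis $\bigl(v_1.t - u.t,\ \lVert v_1 - u \rVert\bigr) \preceq \bigl(v_2.t - u.t,\ \lVert v_2 - u \rVert\bigr)$ we get $\lVert v_1 - u \rVert \leq \lVert v_2 - u \rVert$; combined with the equal time-stamps this means the label of $\pi'$ at $v_2$ weakly dominates the label of $\pi$ at $v_2$ in both coordinates. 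Also note that the flight edge $(u,v_1)$ is itself feasible: $\speed \times (v_1.t - u.t) \geq \lVert v_1 - u \rVert$ follows from the feasibility of $(u,v_2)$ together with $v_1.t - u.t \geq \lVert v_1 - u\rVert / \speed$ implied by the domination order (or directly, since we only ever consider connections the drone can make in time). Therefore every label reachable via $v_2$ is dominated by a label reachable via $v_1$, so the non-dominated frontier Focal-MCSP must explore is unchanged by deleting $v_2$ from $u$'s out-neighbour list, and the bounded-suboptimal solution it returns is unchanged.

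The main obstacle I anticipate is the bookkeeping around the time-unconstrained vs.\ time-constrained distinction and the ``just-in-time'' speed reduction convention described in \Cref{sec:mapf-tn-formulation}: one must be careful that the drone, after landing at $v_1$ early, does not incur extra energy for hovering — but this is handled by the paper's stated abstraction (fly at reduced speed, so only traversed distance counts), which makes the zero-energy transit-edge argument clean. A secondary subtlety is that $v_2$ might be needed not merely as a \emph{landing} vertex but as an \emph{origin} of some other outgoing edge in an optimal path; however, since $v_2$ is still reachable (now via $v_1$) and its label is only improved, all such downstream options remain available, so the pruning is purely at the level of which out-edge of $u$ we enumerate, not which vertices the search may visit. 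Making this last point precise — that pruning $v_2$ from $N_+(u)$ does not disconnect $v_2$ from the search, because $v_1 \in N_+(u)$ and $v_2 \in N_+(v_1)$ — is the one place where I would spell out the argument carefully rather than wave at it.
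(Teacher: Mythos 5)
Your proof is correct and takes essentially the same route as the paper's: compare the two partial paths $u \rightarrow v_2$ and $u \rightarrow v_1 \rightarrow v_2$, observe that the time-stamp at $v_2$ is identical while the accumulated flight distance via $v_1$ is no larger (the transit edge $(v_1,v_2)$ having zero energy cost), so the label through $v_1$ dominates and Focal-MCSP would discard the direct edge anyway. Your added remarks — that $v_2$ remains reachable as an origin of downstream edges via $v_1$, and the careful use of weak rather than strict dominance — are minor refinements of the same argument, not a different one.
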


The following proof relies heavily on the analysis of A*-MCSP (see, e.g.,~\cite[Section V]{li2007fast}), upon which Focal-MCSP is based.
\begin{proof}
We assume that both $v_1$ and $v_2$ are physically reachable by the drone, i.e., $\speed \times (v.t - u.t) \geq \lVert v - u \rVert$
for $v = v_1, v_2$ (otherwise they would be discarded anyway).

Note that $(v_1 \rightarrow v_2)$ is a transit edge, so the flight distance $\energy(v_1 \rightarrow v_2) = 0$ by definition.
The Focal-MCSP algorithm tracks the \emph{objective} (traversal time) and \emph{constraint} (flight distance)
values of partial paths to nodes. It discards a partial path that is dominated by another on both metrics.
Consider the only two possible partial paths to $v_2$ from $u$, $u \rightarrow v_2$ and $u \rightarrow v_1 \rightarrow v_2$.

Let the weight constraint accumulated on the path thus far to the expanded node $u$ be $Wu$. The traversal time
cost at $v_2$ for both partial paths is $v_2.t$ (since $v_2$ is time-stamped). 
The accumulated traversal distance weight
at $v_2$ for $u \rightarrow v_2$ is $Wu + \energy(u \rightarrow v_2) = Wu + \lVert v_2 - u \rVert$. 
On the other hand, for $u \rightarrow v_1 \rightarrow v_2$,
the corresponding accumulated weight at node $v_2$ is $Wu + \energy(u \rightarrow v_1) + \energy(v_1 \rightarrow v_2) = Wu + \lVert v_1 - u \rVert < Wu + \lVert v_2 - u \rVert$,
by the original assumption. Focal-MCSP will always discard the partial path $u \rightarrow v_2$
and instead prefer the alternative, $u \rightarrow v_1 \rightarrow v_2$. Therefore, pruning $v_2$ as an out-neighbor
will have no effect on the solution of Focal-MCSP, which thus continues to be bounded sub-optimal.
\end{proof}
The intuition is that if a transit connection is useful to make, then a stop that is both earlier and closer in distance than
another will always be preferred.
The above result was for \emph{consecutive} vertices on a transit trip; we can extend it to the full sequence of vertices
on the trip $R_{\tau}$ by induction.

We use Kung's algorithm~\cite{kung1975finding} to find the non-dominated elements of the set of transit trip vertices.
For two criteria functions, as in our case, Kung's algorithm yields a solution in $\O(n \log n)$ time, where $n$ is the size 
of the set and the bottleneck is due to sorting the set as per one of the criteria.
In our specific case, since the transit trip vertices are already sorted in increasing order of the traversal time criterion,
we can add out-neighbors for a transit trip in $\O(n)$ time, which is as fast as we could have done anyway.

\section{Surrogate Travel Time Estimate}
\label{sec:appendix-surrogate}

At the end of~\Cref{sec:methodology}, we mentioned the role of the surrogate estimate for
travel time between two depots/packages used by \alg for the task allocation. We also briefly discussed
the actual surrogate estimate we use in our approach. We now provide some
more details about how the estimate is actually computed in a preprocessing
step and then used during runtime. In \supp{\ref{sec:appendix-results}}, we 
quantitatively compare the surrogate estimate that we use to the direct flight time
between two locations, in terms of the computation time and solution quality 
of the MAPF-TN layer.

Consider the given geographical area of operation, encoded as a bounding box of coordinates (\Cref{fig:maps}
illustrates both areas).
During preprocessing, we generate a representative set of locations across the area. To ensure good
coverage, we use a quasi-random low dispersion sampling scheme~\cite{halton1960efficiency}
to compute the locations. This set of locations induces a Voronoi decomposition~\cite{CormenETAL09} of the
geographical area where the locations are the sites. Every point
in the bounding box is associated with the nearest
element (by the appropriate distance metric) in the set of locations. We then
choose a representative time window of transit for the area. Between every pair of locations
in the set, we compute and store the travel time using the transit network (with the same Focal-MCSP parameters
we use for MAPF-TN).

During runtime, at the task allocation layer, we need the estimated travel time between two depot/package locations
$v,v' \in \depotVerts \cupdot \packageVerts$. Each of $v$ and $v'$ has a corresponding nearest representative
location (the site of its Voronoi cell). We then look up the precomputed travel time estimate between
the corresponding sites and use that value in \alg. The implicit assumption is that 
the travel time between the representative sites is the dominating factor compared to the last-mile travel between
each site and its corresponding depot/package. If $v$ and $v'$ are in the same cell, i.e., their 
nearest representative location is the same, we use the direct flight time between $v$ and $v'$,
i.e., $\lVert v' - v \rVert / \speed$. The assumption here is that $v$ and $v'$ are more likely to share a cell
if they are close together, and in that case, the drone is more likely to be able to fly directly
between them anyway.

The number of representative locations for a given area is an engineering parameter. 
For our results, we use $100$ points in San Francisco and $150$ points in Washington DC.
For the quasi-random
sampling scheme we use, the higher the number of sampled points, the lower the dispersion, i.e., the better
the coverage of the area, and typically, the better is the quality of the surrogate estimate. 
Domain knowledge about the transit network and travel time distribution in a given urban area may yield a higher quality
surrogate than our domain-agnostic approach.

\section{Further Results}
\label{sec:appendix-results}

We now elaborate on three additional aspects of our results, as we alluded to in~\Cref{sec:results}. First, 
we provide a more extensive analysis of the behavior of our layer for
multi-agent path finding with transit networks (MAPF-TN). Second, 
we compare two different replanning strategies to solve for a sequence of drone delivery tasks.
Third, we quantitatively compare the effect of two different surrogate travel time estimates.

\subsection{Further Insights of MAPF-TN Results}
\label{sec:appendix-results-mapf}

We will now supplement our discussion in~\Cref{sec:results-ecbs} on prominent observations of the behavior of
the MAPF-TN layer, based on the numbers in~\Cref{table-mapf-res}.
With regards to scalability, recall that each low-level search is actually \emph{two} Focal-MCSP searches
(from $\depot \to \package$ and $\package \to \depot'$) that are concatenated, so the effective number
of agents (from a typical MAPF perspective) is actually $2m$ and not $m$.
This observation only serves to strengthen our scalability claim.
Since our MAPF-TN solver is built upon Conflict-Based Search, the key factor affecting 
plan time is the generation and resolution of conflicts, which we have discussed
in detail already. 
We also discussed how the number of depots and the ratio of depots to agents
affects the likelihood of conflicts.
Depots or warehouses are highly expensive to construct in practice.
Thus, in a given area, the placement of
depots (that we generate randomly for our benchmarks) can have a significant impact
on computation time and scalability; indeed, that is a key question for future work.

\begin{table}[t]
\caption{(All times are in seconds) A comparison of replanning strategies for a subset of the $\{l,m\}$ scenarios from~\Cref{table-mapf-res}
for the San Francisco network. We run $20$ different trials for each setting and depict the average values in each case.}
\centering
\begin{tabular}{@{} lrrcrr @{}}
  \toprule
  & \multicolumn{2}{c}{Replan-$1$} & \phantom{abc} & \multicolumn{2}{c}{Replan-$m$}\\
  \cmidrule{2-3} \cmidrule{5-6} $\{l,m\}$ & Replan     & Soln.       & & Replan   & Soln.\\
                                            & Time      & Mksp.      & & Time    & Mksp. \\
  \midrule
  $\{5, 10\}$   & $0.271$   & $2943.1$   & & $0.645$   & $2880.1$ \\
  $\{5, 20\}$   & $0.034$   & $3092.2$   & & $1.599$   & $3092.2$ \\
  $\{20, 50\}$  & $0.006$   & $1463.5$   & & $0.278$   & $1463.5$ \\
  $\{20, 100\}$ & $0.009$   & $1952.2$   & & $0.399$   & $1952.2$ \\
  \bottomrule
\end{tabular}
\label{table-replan-res}
\end{table}

The order of magnitude higher runtimes for Washington DC is worth commenting on a bit more.
Note that we are using the same drone parameters and transit capacity settings for Washington DC, which has an area
nearly three times that of SF, and a transit network nearly twice as big.
Consequently, the need for using transit to satisfy deliveries
is greatly increased (notice how the average transit usage is reliably higher than for SF).
Additionally, the bus network for Washington DC is more sparse in the outskirts and suburban areas.
Thus, the bus network becomes more of a bottleneck than for San Francisco, leading to more conflicts.
Even when there are no conflicts, the average Focal-MCSP search times increase because more of the larger transit
graph is being explored by the search algorithm.

With regards to solution quality (makespan), we briefly commented on the real-world significance that even for a large
metropolitan area of \SI{400}{\kilo \metre \squared}, the longest delivery in a set of $m$ tasks is under $2$ hours.
We used a representative transit window that is largely replicated throughout the rest of the day;
therefore, for a given business day of, say, $12$ hours, we can expect any drone to make \emph{at least} $6$ deliveries (and typically
many more).

\begin{table*}[th]
    \caption{We compare our MAPF-TN results from~\Cref{table-mapf-res} (Average Plan Time and Makespan)
    against those where the framework uses the direct flight time as a surrogate estimate for \alg instead of our
    preprocessed surrogate using representative locations. For clarity of viewing, we split out the results
    by city/network into two separate tables. \textbf{The values for the Preprocessed sub-table are 
    copied over from~\Cref{table-mapf-res}}.}
    \begin{minipage}{.5\textwidth}
      \centering
        \caption*{\textbf{San Francisco}}
        \begin{tabular}{@{} lrrcrr @{}}
          \toprule
          & \multicolumn{2}{c}{Preprocessed} & \phantom{abc} & \multicolumn{2}{c}{Direct Flight}\\
          \cmidrule{2-3} \cmidrule{5-6} $\{l,m\}$ & Plan     & Soln.       & & Plan   & Soln.\\
                                                    & Time      & Mksp.      & & Time   & Mksp. \\
          \midrule
          $\{5, 10\}$     & $1.17$    & $2554.7$    & & $1.51$    & $2624.8$ \\
          $\{5, 20\}$     & $2.13$    & $2886.8$    & & $2.69$    & $3092.9$ \\
          $\{5, 50\}$     & $3.89$    & $3380.9$    & & $5.08$    & $3412.4$ \\
          $\{10, 20\}$    & $1.02$    & $2091.6$    & & $0.83$    & $1868.9$ \\
          $\{10, 50\}$    & $1.46$    & $2504.7$    & & $1.25$    & $2247.3$ \\
          $\{10, 100\}$   & $7.29$    & $2971.8$    & & $3.78$    & $2649.6$ \\
          $\{20, 50\}$    & $0.46$    & $1273.6$    & & $0.27$    & $1079.1$ \\
          $\{20, 100\}$   & $1.05$    & $1642.4$    & & $0.64$    & $1371.1$ \\
          $\{20, 200\}$   & $2.10$    & $1898.5$    & & $1.43$    & $1426.2$ \\
          \bottomrule
        \end{tabular}
    \end{minipage}%
    \begin{minipage}{.5\textwidth}
      \centering
      \caption*{\textbf{Washington DC}}
        \begin{tabular}{@{} lrrcrr @{}}
          \toprule
          & \multicolumn{2}{c}{Preprocessed} & \phantom{abc} & \multicolumn{2}{c}{Direct Flight}\\
          \cmidrule{2-3} \cmidrule{5-6} $\{l,m\}$ & Plan     & Soln.       & & Plan   & Soln.\\
                                                    & Time     & Mksp.      & & Time   & Mksp. \\
          \midrule
          $\{5, 10\}$     & $5.65$    & $5167.3$    & & $13.6$    & $4654.7$ \\
          $\{5, 20\}$     & $13.1$    & $5384.5$    & & $35.2$    & $5339.6$ \\
          $\{5, 50\}$     & $28.9$    & $6140.2$    & & $51.1$    & $6323.4$ \\
          $\{10, 20\}$    & $4.67$    & $4017.2$    & & $11.9$    & $4527.3$ \\
          $\{10, 50\}$    & $15.8$    & $5312.3$    & & $28.6$    & $5509.6$ \\
          $\{10, 100\}$   & $26.2$    & $5623.9$    & & $53.8$    & $5774.1$ \\
          $\{20, 50\}$    & $1.92$    & $3571.8$    & & $8.49$    & $4058.1$ \\
          $\{20, 100\}$   & $5.24$    & $4304.5$    & & $22.8$    & $4613.9$ \\
          $\{20, 200\}$   & $10.5$    & $5085.6$    & & $17.6$    & $5216.1$ \\
          \bottomrule
        \end{tabular}
    \end{minipage}
\label{table-surrogate-comp}
\end{table*}

\subsection{Replanning Strategies}
\label{sec:appendix-results-replanning}

We have previously discussed how our MAPF-TN solver based on Enhanced Conflict-Based Search (ECBS)
computes paths for a single $\depot \package \depot^{\prime}$ task for each drone. 
However, drones will typically be assigned to a sequence of deliveries by the task allocation
layer. Rather than computing paths for the entire sequence for each drone ahead of time, we
use a receding horizon approach where we replan for a drone after it completes its current task.
Our computation time is negligible compared to the actual solution execution time (compare
the `Plan Time' and `Makespan' columns in~\Cref{table-alloc}); therefore, a receding horizon
strategy appears to be quite reasonable.

Two natural replanning strategies emerge in such a context: replanning \emph{only} for the finished drone,
while maintaining the paths of all the other drones, which we call $\text{Replan-}1$,
and replanning for all drones, from each of their current states, which we call $\text{Replan-}m$.
In terms of the tradeoff between computation time and solution quality, these two approaches
are at the opposite ends of a spectrum. The $\text{Replan-}m$ strategy will be optimal among replanning
strategies, while being the most computationally expensive as it recomputes $m$ paths;
on the other hand, $\text{Replan}-1$ requires only the computation of a single path with the
remaining $m-1$ paths imposing boarding and capacity constraints.

To evaluate the two replanning strategies, we use the same setup that we did for evaluating
MAPF-TN in~\Cref{sec:results}. For each MAPF-TN solution (one path for each drone), we consider
the drone that finishes first among the $m$ drones (since we use a continuous time representation,
ties are highly unlikely in practice). In the case of $\text{Replan-}1$, we run Focal-MCSP
for the drone with the various constraints induced by the remaining paths of 
the other agents. We update the ($m$-agent) solution with the new path (updating makespan if need be).
In the case of $\text{Replan-}m$, we run Enhanced CBS for the $m$ agents with their current states 
(at that time) as their initial state; this yields another ($m$-agent) solution.

In~\Cref{table-replan-res}, we compare the average makespan and computation times of the $m$-agent
solutions resulting from the two strategies. We use a representative subset of the $\{\text{Depots, Agents}\}$
scenarios that we used in~\Cref{table-mapf-res}; few depots with a lower agent/depot ratio $(\{5,10\})$;
few depots with a higher ratio $(\{5,20\})$; and similarly for many depots $(\{20,50\} \text { and } \{20, 100\})$.
It is clear that $\text{Replan-}1$ achieves similar quality solutions as $\text{Replan-}m$
does, at fairly lower computational cost.  
This motivates our decision to use
$\text{Replan-}1$ in practice.

In principle, we can design scenarios where $\text{Replan-}1$ has a much greater solution quality gap
against $\text{Replan-}m$ than what we see in~\Cref{table-replan-res}.
However, the $\text{Replan-}1$ strategy is sub-optimal only when (i) the $(m-1)$ unfinished
drone paths actually conflict with the new Focal-MCSP path of drone $i$, that has just finished, and (ii) resolving the conflict(s)
would have prioritized the path of drone $i$ over the others.
In practice, it is not very likely that both of these conditions will hold together, especially
when there are many depots and some drones can fly directly to their next target; in our
trials for $ l = 20$, the sub-optimality condition for $\text{Replan-}1$ never holds, which
is why the makespans for those two rows are exactly the same for both strategies.

\subsection{Comparison of Surrogate Estimates}
\label{sec:appendix-results-surrogate}

We now compare the effect of two different surrogate travel time estimates --- the approximate travel time
between representative locations in the city using the transit (as described in~\supp{\ref{sec:appendix-surrogate}})
and the direct flight time between two locations, ignoring the transit.
For the results in~\Cref{table-mapf-res}, recall that we ran MAPF-TN on the first $\depot \package \depot'$ task
for each drone obtained from the result of \alg; for those results, \alg used
the preprocessed surrogate for the allocation graph edge costs.
As a comparison, we rerun the exact same scenarios as in~\Cref{table-mapf-res}, but this time, we use
the direct flight time (ignoring the transit) as the edge cost for \alg.
We compare the two primary performance factors, plan time and solution makespan, for both surrogates in~\Cref{table-surrogate-comp}.

We expect the direct flight time surrogate to be a poor estimate in scenarios where transit is used frequently,
because the allocation step does not account for it. 
Accordingly, we do observe a difference in plan time and solution quality between Preprocessed and Direct Flight for the settings with fewer depots and higher agent-to-depot ratios.
For the settings with $5$ depots in San Francisco, and for almost all settings in Washington (except the first), both computation time and the makespan are
lower for Preprocessed, i.e., it is strictly better than Direct Flight.
However, for the settings in San Francisco with $10$ or more depots, in most cases the drones are close enough to their deliveries to fly directly (recall the lower average
transit usage of those cases from~\Cref{table-mapf-res}).
Here the Direct flight surrogate is more accurate, leading to lower makespan solutions. The key takeaway
is how the choice of surrogate plays a role on real-world settings for our two-stage approach.

\end{document}

